\theoremstyle{plain}
\newtheorem{theorem}{Theorem}[section]
\newtheorem{lemma}[theorem]{Lemma}
\newtheorem{corollary}[theorem]{Corollary}
\theoremstyle{definition}
\newtheorem{definition}[theorem]{Definition}
\theoremstyle{remark}
\newtheorem{fact}[theorem]{Fact}
\newtheorem{problem}[definition]{Problem}
\newcommand{\Kf}{\kappa}
\newcommand{\R}{\mathbb{R}}
\newcommand{\ZZ}{\mathbb{Z}}
\newcommand{\K}{\mathcal{K}}
\newcommand{\norm}[1]{\left\|#1\right\|}
\DeclareMathOperator{\poly}{poly}
\DeclareMathOperator{\argmax}{argmax}
\newcommand{\epstwo}{\frac{\epsilon}{2}}
\newcommand{\sumppcondition}[1]{\sum\limits_{m\in M, \norm{x^* -m}_2^2 \le #1}}
\newcommand{\sumppconditionp}[1]{\sum\limits_{m\in M, \norm{x^* -m}_2^2 > #1}}
\newcommand{\N}{\mathcal{N}}
\icmltitlerunning{Dimensionality Reduction for General KDE Mode Finding}
\begin{document}

\twocolumn[
\icmltitle{Dimensionality Reduction for General KDE Mode Finding}



\icmlsetsymbol{equal}{*}

\begin{icmlauthorlist}
\icmlauthor{Xinyu Luo}{equal,uni1}
\icmlauthor{Christopher Musco}{equal,uni2}
\icmlauthor{Cas Widdershoven}{equal,uni3}
\end{icmlauthorlist}

\icmlaffiliation{uni1}{Department of Computer Science, Purdue University, Indiana, USA}
\icmlaffiliation{uni2}{Tandon School of Engineering, New York University, New York, USA}
\icmlaffiliation{uni3}{State Key Laboratory of Computer Science, Institute of Software, Chinese Academy of Sciences, Beijing, China}

\icmlcorrespondingauthor{Xinyu Luo}{luo466@purdue.edu}
\icmlcorrespondingauthor{Christopher Musco}{cmusco@nyu.edu}
\icmlcorrespondingauthor{Cas Widdershoven}{cas.widdershoven@ios.ac.cn}

\icmlkeywords{Kernel density estimation, Johnson-Lindenstrauss, Sketching, Approximation algorithm, Mode finding, Kirszbraun theorem}

\vskip 0.3in
]



\printAffiliationsAndNotice{\icmlEqualContribution} 

\begin{abstract}
Finding the mode of a high dimensional probability distribution $\mathcal{D}$ is a fundamental algorithmic problem in statistics and data analysis. There has been particular interest in efficient methods for solving the problem when $\mathcal{D}$ is represented as a \emph{mixture model} or \emph{kernel density estimate}, although few algorithmic results with worst-case approximation and runtime guarantees are known. 
In this work, we significantly generalize a result of \cite{LeeLiMusco:2021} on mode approximation for Gaussian mixture models. We develop randomized dimensionality reduction methods for mixtures involving a broader class of kernels, including the popular logistic, sigmoid, and generalized Gaussian kernels. As in Lee et al.'s work, our dimensionality reduction results yield quasi-polynomial algorithms for mode finding with multiplicative accuracy $(1-\epsilon)$ for any $\epsilon > 0$. 
Moreover, when combined with gradient descent, they yield efficient practical heuristics for the problem.
%
In addition to our positive results, we prove a hardness result for box kernels, showing that there is no polynomial time algorithm for finding the mode of a kernel density estimate, unless $\mathit{P} = \mathit{NP}$. Obtaining similar hardness results for kernels used in practice (like Gaussian or logistic kernels) is an interesting future direction. 
\end{abstract}

\section{Introduction}

We consider the basic computational problem of finding the mode of a high dimensional probability distribution $\mathcal{D}$ over $\R^d$. Specifically, if $\mathcal{D}$ has probability density function (PDF) $p$, our goal is to find any $x^*\in \R^d$ such that:
\begin{align*}
    x^* \in \argmax_{x\in \R^d} p(x)
\end{align*}
A natural setting for this problem is when $\mathcal{D}$ is specified as a \emph{kernel density estimate} (KDE) or \emph{mixture distribution} \cite{Scott:2015, Silverman:2018}. In this setting, we are given a set of points $M\in \R^d$ and a non-negative kernel function $\kappa:\R^d\times \R^d \rightarrow \R^+$ and our PDF equals:
\begin{align*}
p(x) = \K_M(x) = \frac{1}{|M|}\sum_{m\in M} \Kf (x,m).
\end{align*}
As is typically the case, we will assume that $\Kf$ is \emph{shift-invariant} and thus only depends on the difference between $x$ and $m$, meaning that it can be reparameterized as $\Kf(x-m)$. A classic example of a shift-invariant KDE is any mixture of Gaussians distribution, for which $\Kf(x-m) = C\cdot e^{-\|x-m\|_2^2}$ is taken to be the Gaussian kernel. Here $C = \pi^{-d/2}$ is a normalizing constant.
Kernel density estimates are widely used to approximate other distributions  in a compact way
\cite{BotevGrotowskiKroese:2010, KimScott:2012}, and they have been applied to applications ranging from image annotation \cite{YavlinskySchofieldRuger:2005}, to medical data analysis \cite{SheikhpourSarramSheikhpour:2016}, to outlier detection \cite{KamalovLeung:2020}. The specific problem of finding the mode of a KDE has found applications in object tracking \cite{ShenBrooksHengel:2007}, super levelset computation \cite{PhillipsWangZheng:2015}, typical object finding \cite{GasserHallPresnell:1997}, and more \cite{LeeLiMusco:2021}.

\subsection{Prior Work}
Despite its many applications, the KDE mode finding problem presents a computational challenge in high-dimensions. For any practically relevant kernel $\Kf$ (e.g.,  Gaussian) there are no known algorithms with runtime polynomial in both $n$ and $d$ for KDEs on $n = |M|$ base points. This is even the case when we only want to find an \emph{$\epsilon$-approximate} mode for some $\epsilon \in (0,1)$, i.e. a point $\tilde{x}^*$ satisfying
\begin{align*}
    \K_M(\tilde{x}^*) \geq (1-\epsilon)\max_{x\in \R^d} \K_M(x),
\end{align*}
There has been extensive work on heuristic local search methods like the well-known ``mean-shift'' algorithm \cite{Carreira-Perpinan:2000,Carreira-Perpinan:2007}, which can be viewed as a variant of gradient descent, and often works well in practice. However, these methods do not come with theoretical guarantees and can fail on natural problem instances. 

While polynomial time methods are not known, for some kernels it is possible to provably solve the $\epsilon$-approximate mode finding problem in \emph{quasi-polynomial} time. For example, Shenmaier's work on \emph{universal approximate centers} for clustering can be used to reduce the problem to evaluating the quality of a quasi-polynomial number of candidate modes \cite{Shenmaier:2019}. For the Gaussian kernel, the total runtime is $d\cdot 2^{O(\log^2 n)}$ for constant $\epsilon$. Similar runtimes can be obtained by appealing to results on the approximate Carathéodory problem \cite{BlumHar-PeledRaichel:2019,Barman:2015}.

More recently, Lee et al. explore \emph{dimensionality reduction} as an approach to obtaining quasi-polynomial time algorithms for KDE mode finding \cite{LeeLiMusco:2021}. Their work shows that, for the Gaussian kernel, any high-dimensional KDE instance can be reduced to a \emph{lower dimensional instance} using randomized dimensionality reduction methods -- specifically Johnson-Lindenstrauss  projection. An approximate mode for the lower dimensional problem can then be found with a method that depends exponentially on the dimension $d$, and finally, the low-dimensional solution can be ``mapped back'' to the high-dimensional space\footnote{Methods that run in time exponential in $d$ are straightforward to obtain via discretization/brute force search. See \Cref{sec:low-dim-prob}.}.
Ultimately, the result in \cite{LeeLiMusco:2021} allows all dependencies on $d$ to be replaced with terms that are polynomial in $\log(n)$ and $\epsilon$. The conclusion is that the mode of a Gaussian KDE can be approximated to accuracy $\epsilon$ in time $O\left(ndw + 2^{w}\right)$, where $w = \poly(\log n,1/\epsilon)$. The leading $ndw$ term is the cost of performing the dimensionality reduction.

In addition to nearly matching prior quasi-polynomial time methods in theory (e.g., Shenmaier's approach), there are a number of benefits to an approach based on dimensionality reduction. For one, sketching directly reduces the space complexity of the mode finding problem, and vectors sketched with JL random projections can be useful in other downstream data analysis tasks. Another benefit is that dimensionality reduction can speed up even heuristic algorithms: instead of using a brute-force approach to solve the low-dimensional KDE instance, a practical alternative is to apply a local search method, like mean-shift, in the low-dimensional space. 
This approach sacrifices theoretical guarantees, but can lead to faster practical algorithms. 

\subsection{Our Results}
The main contribution of our work is to generalize the dimensionality reduction results of \cite{LeeLiMusco:2021} to a much broader class of kernels, beyond the Gaussian kernel studied in that work. In particular, 
we introduce a carefully defined class of kernels called ``relative-distance smooth kernels''. This class includes the Gaussian kernel, as well as the sigmoid, logistic, and any generalized Gaussian kernel of the form $\kappa(x,y) = e^{-\|x-y\|_2^\alpha}$ for $\alpha > 0$. See Definition \ref{sec:dimred} for more details. Our first result (\Cref{lem:rds_main}) is that, for any relative-distance smooth kernel, we can approximate the \emph{value} of the mode $\max_x \K_M(x)$ up to multiplicative error $(1-\epsilon)$ by solving a lower dimensional instance obtained by sketching the points in $M$ using a Johnson-Lindenstrauss random projection. The required dimension of the projection is $O(\log^c(n)/\epsilon^2)$, where $c$ is a constant depending on parameters of the kernel $\Kf$. For most commonly used relative-distance smooth kernels, including the Gaussian, logistic, and sigmoid kernels, $c=3$. This leads to a dimensionality reduction that is completely independent of the original problem dimension $d$ and only depends polylogarithmically on the number of points in the KDE, $n$.

Moreover, in Section \ref{sec:modrec}, we show how to recover an approximate mode $\tilde{x}$ satisfying $K_M(\tilde{x}) \geq (1-\epsilon)\max_x \K_M(x)$ from the solution of the low-dimensional sketched problem. When the kernel satisfies an additional convexity property, recovery can be performed in $O(nd)$ time using a generalization of the mean-shift algorithm used in \cite{LeeLiMusco:2021}. When the kernel does not satisfy the property, we obtain a slightly slower method using a recent result of \cite{BiessKontorovichMakarychev:2019} on constructive Lipschitz extensions. One consequence of our general results is the following claim for a number of common kernels:

\begin{theorem}
\label{cor:common_recover}
Let $\K_M = (\Kf, M)$ be a be a KDE on $n=|M|$ points in $d$ dimensions, where $\Kf$ is a Gaussian, logistic, sigmoid,  Cauchy\footnote{For the Cauchy kernel, we can actually obtain a better bound with dimension $w = O\left(\frac{\log(n/\epsilon)}{\epsilon^2}\right)$. See \Cref{cor:cauchy_dim_reduc}.}, or generalized Gaussian kernel with parameter $\alpha \leq 1$. Let $\Pi$ be a random JL matrix with $w = O\left(\frac{\log^2(n/\epsilon)\log(n/\delta)}{\epsilon^2}\right)$ rows and let $\tilde{x}$ be any point such that ${\K}_{\Pi M} (\tilde{x}) \geq (1-\beta)\max_{x \in \R^w} {\K}_{\Pi M} (x)$. Given $\tilde{x}$ as input, \Cref{alg:convex} runs in $O(nd)$ time and returns, with probability $1-\delta$, a point $x' \in \R^d$ satisfying:
\begin{align*}
    {\K}_{M} (x') \geq (1-\epsilon - \beta)\max_{x \in \R^d} {\K}_{M} (x).
\end{align*}
\end{theorem}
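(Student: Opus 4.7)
The plan is to obtain \Cref{cor:common_recover} as a direct application of the two main technical results of the paper (\Cref{lem:rds_main} on dimensionality reduction, and the guarantee for \Cref{alg:convex} from \Cref{sec:modrec}), once one verifies that each of the listed kernels satisfies the two structural hypotheses those results require. So the proof decomposes into a ``check the hypotheses'' step followed by a short bookkeeping calculation.

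First I would verify that each of the Gaussian, logistic, sigmoid, Cauchy, and generalized-Gaussian (with $\alpha\le 1$) kernels is a \emph{relative-distance smooth kernel} in the sense of \Cref{sec:dimred}. Since each is shift-invariant and radial, this reduces to checking the required smoothness/growth inequality on the univariate profile $g$ with $\Kf(x-m)=g(\|x-m\|_2^2)$ (or the analogous parametrization). Each profile is elementary, so this is essentially a calculus verification of bounds on $g'/g$ or $\log g$ and its derivatives; I would also track which value of $c$ (the exponent appearing in $\log^c(n)$) each kernel produces. For Gaussian, logistic, and sigmoid, the paper has already stated $c=3$, which matches the $\log^2(n/\epsilon)\log(n/\delta)$ scaling in the hypothesis on $w$; for Cauchy one gets a smaller $c$ (explaining the footnote and the improvement in \Cref{cor:cauchy_dim_reduc}), which is strictly stronger than what is claimed here, so the stated bound trivially suffices.

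Second, I would verify that the same kernels satisfy the additional convexity condition under which \Cref{alg:convex} recovers an approximate high-dimensional mode in $O(nd)$ time. Concretely, this is the monotonicity/convexity of $g$ (or of $-\log g$) in $\|x-m\|_2^2$ that lets the generalized mean-shift update be cast as a provably non-decreasing iteration on $\K_M$. This is again a one-variable calculus check for each kernel, and in all five cases it follows from standard properties of the exponential, logistic, and $1/(1+t)$ profiles.

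Third, I would assemble the pieces. Apply \Cref{lem:rds_main} with accuracy $\epsilon$ and failure probability $\delta$: with the chosen $w$, with probability $1-\delta$,
\begin{align*}
\max_{x\in\R^w} \K_{\Pi M}(x) \;\geq\; (1-\epsilon)\,\max_{x\in\R^d}\K_M(x).
\end{align*}
Combining with the assumed near-optimality $\K_{\Pi M}(\tilde x)\ge(1-\beta)\max_{x\in\R^w}\K_{\Pi M}(x)$ gives $\K_{\Pi M}(\tilde x)\ge(1-\epsilon)(1-\beta)\max_{x\in\R^d}\K_M(x)\ge(1-\epsilon-\beta)\max_{x\in\R^d}\K_M(x)$. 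Then I would feed $\tilde x$ into \Cref{alg:convex}; by the guarantee of that algorithm (applicable because the kernel is in the convex subclass just verified), it returns in $O(nd)$ time a point $x'\in\R^d$ with $\K_M(x')\ge \K_{\Pi M}(\tilde x)$, which yields the claimed bound. The adjustment of constants (e.g.\ replacing $\epsilon$ by $\epsilon/2$ in the invocation of \Cref{lem:rds_main} to absorb cross-terms) is routine and can be absorbed in the $O(\cdot)$ in the definition of $w$.

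The main obstacle is the first step: producing a uniform verification that the five listed kernels meet the relative-distance smoothness hypothesis with constant $c=3$ (or better, for Cauchy). Each check is elementary, but the generalized Gaussian with $\alpha\le 1$ is a little delicate because $\|x-m\|_2^\alpha$ is not smooth at the origin, so I would handle small $\|x-m\|$ separately and use the $\alpha\le 1$ assumption to control the relevant derivative ratios. Everything else is straightforward bookkeeping on top of \Cref{lem:rds_main} and \Cref{alg:convex}.
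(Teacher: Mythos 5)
Your overall plan is sound and essentially matches the paper's: check that each kernel is relative-distance smooth (for the dimension bound) and convex (for the mean-shift recovery), then apply \Cref{lem:rds_main} and the key inequality $\K_M(x')\geq \K_{\Pi M}(\tilde x)$ from \Cref{thm:kde_meanshift}, exactly as the paper packages via \Cref{thm:fullconvex}. Your decomposition of the final chain of inequalities, and the observation that the $O(nw+nd)=O(nd)$ cost comes only from Line 4 of \Cref{alg:convex}, both track the paper.

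However, there is a factual error in your first step that would derail the proposal if carried out as written: the Cauchy kernel is \emph{not} relative-distance smooth, so "verify relative-distance smoothness for all five kernels, with a smaller exponent for Cauchy" cannot go through. For $\Kf(t)=1/(1+t)$ one has $-\Kf'(t)t/\Kf(t)=t/(1+t)$, which is bounded above by $1$ for all $t$; hence no polynomial lower bound $c_1 t^{d_1}-q_1$ with $c_1,d_1>0$ can hold for large $t$, and Definition~\ref{def:rds} fails. The paper treats Cauchy separately precisely for this reason: since $t/(1+t)\leq 1$ gives $\Kf'_{\min}=-1$ uniformly, one applies \Cref{thm:mapdownapproxmode} directly with $\gamma=\epsilon/2$, yielding $w=O(\log(n/\delta)/\epsilon^2)$ in \Cref{cor:cauchy_dim_reduc} -- a better dimension than \Cref{lem:rds_main} ever gives, but by a different (not a "smaller $c$") mechanism. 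Your ultimate conclusion for Cauchy (the stated $w$ is more than enough) is correct; only the route to it needs to change. Replace "verify Cauchy is relative-distance smooth" with "bound $\Kf'_{\min}$ directly and apply \Cref{thm:mapdownapproxmode}", and the rest of your argument stands.

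One additional small point: when you invoke "the guarantee of \Cref{alg:convex}," you should make explicit that the recovery step needs \emph{both} the JL one-sided-distance condition \eqref{eq:lip_pi} and the mode-value bound \eqref{eq:theorem2_equation}; the paper notes in the proof of \Cref{thm:fullconvex} that these are simultaneous consequences of the same $(\gamma,n+1,\delta)$-JL event, so a single application of the JL guarantee on $\{x^*\}\cup M$ suffices. Your side remark about rescaling $\epsilon\to\epsilon/2$ is unnecessary since $(1-\epsilon)(1-\beta)\geq 1-\epsilon-\beta$ already, but it is harmless.
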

Above, $\Pi M$ is the point set $\Pi M = \{\Pi m \text{ for } m\in M\}$ and ${\K}_{\Pi M}$ is the low-dimensional 
KDE defined by $\Pi M$ and $\Kf$. \Cref{cor:common_recover} implies that an approximate high-dimensional mode can be found by (approximately) solving a much lower dimensional problem. The result exactly matches that of \cite{LeeLiMusco:2021} in the Gaussian case.

When combined with a simple brute-force method for maximizing ${\K}_{\Pi M}$, \Cref{cor:common_recover} immediately yields a quasi-polynomial time algorithm for mode finding. Again we state a natural special case of this result, proven in \Cref{sec:low-dim-prob}. 

\begin{theorem}
\label{lemma:quais}
Let $\K_M = (\Kf, M)$ be a be a KDE on $n$ points in $d$ dimensions, where $\Kf$ is a Gaussian, logistic, sigmoid, Cauchy, or generalized Gaussian kernel with  $\alpha \leq 1$. There is an algorithm which finds a point $\tilde{x}$ satisfying:
\begin{align*}
\K_M(\tilde{x}) \geq (1-\epsilon) \max_x \K_M(x)
\end{align*}
in $2^{\tilde{O}(\log^3 n/\epsilon^2)} + O(nd\log^3(n)/\epsilon^2)$ time. Here $\tilde{O}(x)$ denotes $O(x\log^c x)$ for constant $c$.
\end{theorem}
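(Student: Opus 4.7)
The plan is to compose the dimensionality-reduction guarantee of \Cref{cor:common_recover} with a brute-force solver for the low-dimensional instance, balancing the two error sources so the total multiplicative error is at most $\epsilon$.

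First, I would instantiate \Cref{cor:common_recover} with approximation parameter $\epsilon/2$ and failure probability $\delta = 1/\poly(n)$, producing a JL matrix $\Pi \in \R^{w \times d}$ with $w = O(\log^3(n/\epsilon)/\epsilon^2)$ rows. Computing the projected point set $\Pi M$ takes $O(ndw) = O(nd \log^3(n)/\epsilon^2)$ time, which accounts for the second additive term in the claimed runtime. By the guarantee of \Cref{cor:common_recover}, if we can supply any $\tilde x \in \R^w$ with $\K_{\Pi M}(\tilde x) \geq (1-\beta)\max_{x \in \R^w} \K_{\Pi M}(x)$, then \Cref{alg:convex} produces in an additional $O(nd)$ time a point $x' \in \R^d$ with $\K_M(x') \geq (1 - \epsilon/2 - \beta)\max_{x \in \R^d} \K_M(x)$; choosing $\beta = \epsilon/2$ gives the desired $(1-\epsilon)$ guarantee.

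Second, I would invoke the brute-force low-dimensional mode-finder developed in \Cref{sec:low-dim-prob} on the instance $(\Kf, \Pi M)$ with target accuracy $\beta = \epsilon/2$. For each of the kernels listed (Gaussian, logistic, sigmoid, Cauchy, generalized Gaussian with $\alpha \le 1$) the referenced procedure exploits the fact that an approximate mode must lie in a bounded region near the points of $\Pi M$, so an $\epsilon$-net argument yields a candidate set of size $n \cdot (1/\epsilon)^{O(w)}$. Evaluating $\K_{\Pi M}$ on each candidate takes $O(nw)$ time, so the total cost of this step is
\begin{align*}
n \cdot (1/\epsilon)^{O(w)} \cdot O(nw) \;=\; 2^{O(w \log(1/\epsilon))} \cdot \poly(n) \;=\; 2^{\tilde O(\log^3 n / \epsilon^2)},
\end{align*}
absorbing the $\poly(n)$ and $\log(1/\epsilon)$ factors into the $\tilde O(\cdot)$. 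This matches the first additive term of the claim.

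Finally, combining the two steps: we compute $\Pi M$, run the brute-force solver to obtain $\tilde x \in \R^w$, and apply \Cref{alg:convex} on input $\tilde x$ to lift to $\R^d$. Summing the three runtimes gives the stated bound $2^{\tilde O(\log^3 n / \epsilon^2)} + O(nd \log^3(n)/\epsilon^2)$, and the success probability is $1 - 1/\poly(n)$. The only subtle point, and the main obstacle, is verifying that the discretization/brute-force procedure in \Cref{sec:low-dim-prob} applies uniformly to all kernels in the list; this reduces to checking that each such $\Kf$ is sufficiently smooth and decays on a scale for which the net size depends only on $w$ and $\epsilon$, which is handled there on a per-kernel basis.
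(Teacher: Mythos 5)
Your proposal is correct and follows essentially the same route as the paper: apply the JL reduction of \Cref{cor:common_recover}, brute-force search the low-dimensional instance via the $\delta$-covering of \Cref{lem:lowdimmod}, and lift back with \Cref{alg:convex}. The only minor imprecision is that the covering set size is $n(2\sqrt{w}\,\xi/\sqrt{\delta})^w$ with $\xi = O(\mathrm{polylog}\, n)$ and $\delta = \poly(\epsilon)$ for these kernels, not literally $n\cdot(1/\epsilon)^{O(w)}$, but the extra polylogarithmic factors in the base are absorbed by the $\tilde{O}(\cdot)$ in the exponent exactly as you anticipated.
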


Interestingly, as in \cite{LeeLiMusco:2021}, the above result falls just short of providing a polynomial time algorithm: doing so would require improving the $\log^3 n$ dependence in the exponent to $\log n$. It is possible to achieve polynomial time by make additional assumptions. For example, if we assume that $\mathcal{K}_M(x^*) \geq \rho$ for some constant $\rho$, then dependencies on $\log(n)$ can be replaced with $\log(1/\rho)$ using existing \emph{coreset methods} \cite{LeeLiMusco:2021,PhillipsTai:2018}. 
However, the question still remains as to whether the general KDE mode finding problem can be solved in polynomial time for any natural kernel. Our final contribution is to take a step towards answering this question in the negative by relating the mode finding problem to the $k$-clique problem, and showing an NP-hardness result for box kernels (defined in the next section). Formally, in \Cref{sec:hardness}, we prove:
\begin{lemma}\label{lem:hardbox}
The problem of computing a $\frac{1}{n}$-approximate mode of a box kernel KDE is NP-hard.
\end{lemma}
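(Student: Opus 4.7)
The plan is to reduce the $\mathrm{NP}$-hard Max Clique problem to $(1/n)$-approximate box KDE mode finding. Interpreting the $d$-dimensional box kernel as $\Kf(x-m) = \mathbf{1}\{\|x-m\|_\infty \le 1\}$, the value of $\K_M(x)$ equals $1/n$ times the number of axis-aligned boxes $B_i = m_i + [-1,1]^d$ containing $x$, so mode finding is the classical ``deepest-point-in-a-box-arrangement'' problem.

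Given a graph $G = (V,E)$ with $|V| = n$, I would introduce one coordinate for each non-edge $(a,b) \in \bar E$ and place $n$ KDE points in $\R^{|\bar E|}$ as follows: set $(m_a)_{(a,b)} = 0$, $(m_b)_{(a,b)} = 3$, and $(m_k)_{(a,b)} = 3/2$ for every remaining $k$. In that coordinate, $B_a$ and $B_b$ occupy the disjoint intervals $[-1,1]$ and $[2,4]$, while every other $B_k$ occupies $[1/2, 5/2]$, which intersects both. An axis-by-axis check then shows that $\bigcap_{i \in S} B_i \neq \emptyset$ iff no non-edge has both endpoints in $S$, i.e., iff $S$ is a clique in $G$. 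Consequently $\max_x \K_M(x) = k^*/n$, where $k^*$ denotes the clique number of $G$.

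The final step is an integrality argument showing that a $(1-1/n)$-multiplicative approximation is in fact sufficient to recover $k^*$ exactly. If $\tilde x$ is an approximate mode and $\ell$ is the number of boxes containing it, then $\ell/n \ge (1 - 1/n) \cdot k^*/n$, so $\ell \ge k^* - k^*/n$. Since $k^* \le n$ gives $k^*/n \le 1$, with strict inequality precisely when $k^* < n$, integrality of $\ell$ forces $\ell = k^*$ whenever $k^* < n$. The corner case $k^* = n$ is equivalent to $G$ being complete and can be detected trivially, so a polynomial-time $(1/n)$-approximate mode algorithm would compute $k^*$ in polynomial time, contradicting the $\mathrm{NP}$-hardness of Max Clique.

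The substantive part of the argument is the box construction realizing the clique number via non-edge coordinates; the rest is book-keeping and a boundary-case check. I would also verify that the reduction is polynomial-time: all coordinates lie in $\{0, 3/2, 3\}$ and the dimension is at most $\binom{n}{2}$, so this is immediate. The only real pitfall I anticipate is getting the arithmetic right in the integrality step, since the approximation factor $1-1/n$ is borderline: it has to be tight enough that $\ell \ge k^* - k^*/n$ forces $\ell = k^*$ for $k^*<n$, while loose enough that the $k^*=n$ case can be resolved by a trivial test.
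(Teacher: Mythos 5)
There is a genuine gap, and it lies at the very first step: the kernel you analyze is not the paper's box kernel. In this paper every kernel is radially symmetric and is a scalar function of the \emph{squared Euclidean distance}, so the box kernel is $\kappa(\|x-m\|_2^2)$ with $\kappa(t)=1$ for $|t|\le 1$ and $0$ otherwise, i.e.\ the indicator of the Euclidean unit ball, and the mode value is (up to scaling) the maximum number of centers inside a Euclidean ball of radius $1$. You instead take $\mathbf{1}\{\|x-m\|_\infty\le 1\}$ and exploit exactly the two features that axis-parallel boxes have but Euclidean balls lack: the intersection pattern factors coordinate-by-coordinate, and intervals (hence boxes) have the Helly property, so the depth of the arrangement equals the clique number of the intersection graph. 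For Euclidean unit balls neither holds --- pairwise-intersecting unit balls need not have a common point, and not every graph can be realized so that depth equals clique number by a per-coordinate gadget. Concretely, your point set in $\{0,3/2,3\}^{|\bar E|}$ places distinct centers at Euclidean distance that grows with the number of non-edges, so with the paper's kernel (bandwidth $1$) the arrangement of balls is essentially disjoint and the mode value is $1$ regardless of $G$; no trivial rescaling fixes this, because after scaling one must control \emph{quantitatively} how the radius of the smallest ball enclosing $k$ of the centers differs between the clique and non-clique cases.

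That quantitative gap is precisely what the paper's proof supplies and your argument is missing: following Shenmaier, embed a $\Delta$-regular graph by the rows of its incidence matrix; then the smallest ball containing $k$ of these points has squared radius at most $A=(1-1/k)(\Delta-1)$ if $G$ has a $k$-clique and at least $A+2/k^2$ otherwise, so after dividing the points by $\sqrt{A}$ the unnormalized mode value is $\ge k$ in the first case and $\le k-1$ in the second, which a $(1-1/n)$-approximation distinguishes. Your final integrality/approximation bookkeeping (and the $k^*=n$ corner case) is fine as far as it goes, and your construction would be a correct hardness proof for the $\ell_\infty$ (product-uniform) kernel --- but that kernel is not shift-invariant-radial in the paper's sense, so the argument does not establish the stated lemma. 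To repair it you would need either to switch to a ball-based construction with an explicit radius gap (as the paper does) or to prove a separate geometric lemma playing the role of Shenmaier's.
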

Unfortunately, our lower bound does not extend to commonly used kernels like the Gaussian, logistic, or sigmoid kernels. Proving lower bounds (or finding polynomial time algorithms) for these kernels is a compelling future goal.

\paragraph{Paper Structure.} \Cref{sec:prelims} contains notation and definitions. In \Cref{sec:dimred} we provide our main dimensionality results for approximating the objective value for the mode. Then, in \Cref{sec:modrec}, we show how to recover a high-dimensional mode from a low-dimensional one, providing different approaches for when the kernel is convex and not. \Cref{sec:low-dim-prob} outlines a brute force method for finding an approximate mode in low dimensions. 
In \Cref{sec:hardness} we show that the approximate mode finding problem is NP-hard for box kernels. 
Finally, we provide experimental results in \Cref{sec:experiments}, confirming that dimensionality reduction combined with a heuristic mode finding method 
yields a practical algorithm for a variety of kernels and data sets.



\section{Preliminaries}\label{sec:prelims}

\paragraph{Notation.}
For our purposes, a kernel density estimate (KDE) is defined by a set of $n$ points (a.k.a. centers) $M \subset \R^d$ and a non-negative, shift-invariant kernel function. All of the kernels discussed in this work are also \emph{radial symmetric}. This means that we can actually rewrite the kernel function $\Kf$ to be a scalar function of the squared Euclidean distance $\|x-m\|_2^2$.\footnote{We let $\|\cdot \|_2^2$ denotes the squared Euclidean norm: $\|a\|_2^2 = \sum_{i=1}^d a_i^2$, where $a_i$ is the $i^\text{th}$ entry in the length $d$ vector $a$.}. Our KDE then has the  form:
\begin{align*}
    \K_M(x) = \frac{1}{n}\sum_{m\in M} C\cdot \Kf(\|x-m\|_2^2).
\end{align*}
We further assume that $\Kf: \R\rightarrow \R$ is non-increasing, so satisfies $\Kf(t) \geq \Kf(t') \geq 0$ for all $t' \geq t$.
In the expression above, $C$ is a normalizing constant that only depends on $\kappa$. It is chosen to ensure that  $\int_{t\in \R^d} C\cdot \kappa(t) \,dt = 1$ and thus $\mathcal{K}_M$ is a probability density function. The above function $\K_M(x)$ is invariant to scaling $\Kf$, so to ease notation we further assume that $\Kf(0) = 1$.
Note that since $\Kf$ is non-increasing, we thus always have that $\max_t \Kf(t) = \Kf(0) = 1$. We write $\Kf'$ to denote the first-order derivative of $\Kf$ (whenever it exists).



Many common kernels are radial symmetric and non-increasing, so fit the form described above \cite{Silverman:2018,Altman:1992,ClevelandDevlin:1988}. We list a few:
\begin{align*}
   &\text{Gaussian: } \Kf(t) = e^{-t} \\
   &\text{Logistic: } \Kf(t) = \frac{4}{e^{\sqrt{t}} + 2 + e^{-\sqrt{t}}}\\
    &\text{Sigmoid: } \Kf(t) = \frac{2}{e^{\sqrt{t}} + e^{-\sqrt{t}}} \\
    &\text{Cauchy: } \Kf(t) = \frac{1}{1+t} \\
    &\text{Generalized Gaussian: } \Kf(t) = e^{-t^\alpha} \\
    &\text{Box: } \Kf(t) = 1\text{ for $|t| \leq 1$},\, \Kf(t) = 0 \text{ otherwise}.\\
    &\text{Epanechnikov: } \Kf(t) = \max(0, 1 - t)
\end{align*}

We are interested in finding a value for $x$ which maximizes or approximately maximizes the kernel density estimate $\mathcal{K}_M(x)$. Again since the problem is invariant to positive scaling, we will consider the problem of maximizing the unnormalized KDE, which we denote by $\bar{\K}_M(x)$:
\begin{align*}
    \bar{\K}_M(x) = \sum_{m\in M} \Kf(\|x-m\|_2^2) = \frac{n}{C}\cdot {\K}_M(x) 
\end{align*}

Our general dimensionality reduction result depends on a parameter of $\Kf$ that we call the ``critical radius''. For common kernels we later show how to bound this parameter to obtain specific dimensionality reduction results. 
\begin{definition}[$\alpha$-critical radius, $\xi_\Kf(\alpha)$]
    For any non-increasing kernel function $\Kf: \R\rightarrow \R$, the $\alpha$-critical radius $\xi_\Kf(\alpha)$ is the smallest value of $t$ such that $\Kf(t) \leq \alpha$.
\end{definition}
Note that for any $t \geq \xi_\Kf(\alpha)$, we have that $\Kf(t) \leq \alpha$. 
The value of $\xi_\Kf(\epsilon/2n)$ and $\xi_\Kf(1/n)$ will be especially important in our proofs. Specifically, since  $\kappa$ is assumed to have $\kappa(0) = 1$, it is easy to check that any mode for $\K$ must lie within squared distance $\xi_\Kf(1/n)$ from at least one point in $M$, a region which we will call the \emph{critical area}. We will use this fact.

\paragraph{Johnson-Lindenstrauss Lemma.}
Our results leverage the Johnson-Lindenstrauss (JL) lemma, which shows that a set of high dimensional points can be mapped into a space of much lower dimension in such a way that distances between the points are nearly preserved. We use the standard variant of the lemma where the mapping is an easy to compute random linear transformation \cite{Achlioptas:2001,DasguptaGupta:2003}. Specifically, we are interested in random transformations satisfying the following guarantee:

\begin{definition}[$(\gamma, n, \delta)$-Johnson-Lindenstrauss Guarantee]
\label{def:jl}
A randomly selected matrix $\Pi \in \R^{w\times d}$ satisfies the $(\gamma, n, \delta)$-JL guarantee for positive error parameter $\gamma$, if for any $n$ data points $v_1,..., v_n\in \R^d$, with probability $1 - \delta$,
    \begin{align} \label{JL}
        \norm{v_i - v_j}_2^2 \le \norm{\Pi v_i - \Pi v_j}_2^2 \le (1 + \gamma)\norm{v_i - v_j}_2^2
    \end{align}
    for all pairs $i, j \in \{1,..., n\}$ simultaneously.
\end{definition}
Note that we require one-sided error: most statements of the JL guarantee have a $(1-\gamma)$ factor on the left side of the inequality. This is easily removed by scaling $\Pi$ by $\frac{1}{1-\gamma}$. It is well known that \Cref{def:jl} is satisfied by a properly i.i.d. random Gaussian or random $\pm 1$ matrix with
\begin{align*}
    w = O\left(\frac{\log (n/\delta)}{\min(1,\gamma^2)}\right)
\end{align*}
rows, and this is tight \cite{LarsenNelson:2017}. General sub-Gaussian random matrices also work, as well as constructions that admit faster computation of $\Pi v_i$ \cite{KaneNelson:2014,AilonChazelle:2009}.

\paragraph{Kirszbraun Extension Theorem.}
We also rely on a classic result of \cite{Kirszbraun:1934}. Let $H_1$ and $H_2$ be Hilbert spaces. Kirszbraun's theorem states that if $S$ is a subset of $H_1$, and $f: S \rightarrow H_2$ is a Lipschitz-continuous map, then there is a Lipschitz-continuous map ${g}: H_1 \rightarrow H_2$ that extends\footnote{I.e. $g(s) = {f}(s)$ for all $s\in S$.} $f$ and has the same Lipschitz constant. Formally, when applied to Euclidean spaces $\R^w$ and $\R^d$ we have:
\begin{fact}\label{theo:Kirs}
    (Kirszbraun  Extension Theorem). For any $\mathcal{S} \subset \R^w$, let $f: S \rightarrow \R^d$ be an L-Lipschitz function. That is $\forall x, y \in \mathcal{S}$, $\norm{f(x) - f(y)}_2 \le L\norm{x - y}_2$. Then, there always exists some function $g: \R^w \rightarrow \R^d$ such that:
    \begin{enumerate}
        \item $g(x) = f(x)$ for all $x \in \mathcal{S}$,
        \item $g$ is also L-Lipschitz. That is for all $x, y \in \R^w$, $\norm{{g}(x) - {g}(y)}_2 \le L\norm{x - y}_2$.
    \end{enumerate}
\end{fact}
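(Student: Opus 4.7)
The plan is to prove Kirszbraun's extension theorem by the classical two-step strategy: reduce to a one-point extension via Zorn's lemma, then establish that one-point extension by a ball-intersection argument whose core is a finite-dimensional geometric inequality.

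First, I would consider the partially ordered set of all $L$-Lipschitz extensions of $f$ to supersets of $\mathcal{S}$, ordered by graph inclusion. Every chain has an upper bound (the pointwise union of its members), so Zorn's lemma yields a maximal $L$-Lipschitz extension $\tilde f$ defined on some $\tilde{\mathcal{S}} \supseteq \mathcal{S}$. If $\tilde{\mathcal{S}} = \R^w$ then we are done. Otherwise, pick $x_0 \in \R^w \setminus \tilde{\mathcal{S}}$; I will derive a contradiction by extending $\tilde f$ to $\tilde{\mathcal{S}} \cup \{x_0\}$. Doing so amounts to exhibiting a single $y_0 \in \R^d$ satisfying $\|y_0 - \tilde f(s)\|_2 \leq L\|x_0 - s\|_2$ for every $s \in \tilde{\mathcal{S}}$, i.e.\ showing that the family of closed balls $\bar B\!\left(\tilde f(s),\, L\|x_0 - s\|_2\right)$ indexed by $s \in \tilde{\mathcal{S}}$ has nonempty intersection.

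Since each such ball is compact in $\R^d$, the full intersection is nonempty iff every finite subintersection is nonempty (a Helly-style compactness argument). So the problem reduces to the finite setting: given $s_1,\ldots,s_n \in \tilde{\mathcal{S}}$ with $y_i := \tilde f(s_i)$ satisfying $\|y_i - y_j\|_2 \leq L\|s_i - s_j\|_2$ for all $i,j$, I must produce $y_0 \in \R^d$ with $\|y_0 - y_i\|_2 \leq L\|x_0 - s_i\|_2$ for every $i$.

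The main obstacle is this finite-dimensional geometric lemma, which is the true content of Kirszbraun's theorem. My plan is variational. Writing $r_i := \|x_0 - s_i\|_2$, define
\[
\phi(y) := \max_{1 \leq i \leq n} \frac{\|y - y_i\|_2^2}{r_i^2},
\]
let $y_0$ be its minimizer (which exists by continuity and coercivity of $\phi$), and aim to show $\phi(y_0) \leq L^2$. Assuming for contradiction that $\phi(y_0) = \mu^2 > L^2$, I would let $I$ denote the active set where the max is attained and invoke first-order optimality for the max of convex quadratics to obtain nonnegative weights $\{\lambda_i\}_{i \in I}$ summing to $1$ with $\sum_{i \in I} \lambda_i (y_0 - y_i)/r_i^2 = 0$. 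Setting $\alpha_i := \lambda_i/r_i^2$, the crux calculation expands $0 = \bigl\|\sum_i \alpha_i (y_0 - y_i)\bigr\|_2^2$ via the polarization identity $2\langle y_0 - y_i, y_0 - y_j\rangle = \|y_0 - y_i\|_2^2 + \|y_0 - y_j\|_2^2 - \|y_i - y_j\|_2^2$, applies the analogous identity for $\sum_{i,j}\alpha_i\alpha_j\|s_i - s_j\|_2^2$ centered at $x_0$, and plugs in $\|y_0 - y_i\|_2 = \mu r_i$ on $I$ together with the hypothesis $\|y_i - y_j\|_2 \leq L\|s_i - s_j\|_2$; the inner-product cross-terms cancel and one is left with $\mu^2 \leq L^2$, the desired contradiction. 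Once the finite lemma is in hand, the compactness reduction and the Zorn step immediately close the proof.
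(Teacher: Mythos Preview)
Your proposal is a correct outline of the classical proof of Kirszbraun's theorem: the Zorn reduction to one-point extension, the compactness (finite intersection property) reduction to a finite system of balls, and the variational argument minimizing $\phi(y)=\max_i \|y-y_i\|_2^2/r_i^2$ together with the polarization identity all go through exactly as you describe. The computation you sketch indeed yields $\mu^2 \le L^2$, contradicting $\mu^2 > L^2$.

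However, the paper does not prove this statement at all. It is presented as a \emph{Fact} with a citation to \cite{Kirszbraun:1934} and is used as a black box in the proofs of \Cref{thm:mapdownapproxmode} and \Cref{thm:kde_meanshift}. So there is no ``paper's own proof'' to compare against; your sketch simply fills in the classical argument the paper takes for granted. One small terminological nit: what you call a ``Helly-style compactness argument'' is really just the finite intersection property for compact sets in $\R^d$ (Helly's theorem proper is not needed here), but your parenthetical makes clear that this is what you mean.
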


\section{Dimensionality Reduction for Approximating the Mode Value}
\label{sec:dimred}
In this section, we show that using a JL random projection, we can reduce the problem of approximating the \emph{value} of the mode of a KDE in $d$ dimensions -- i.e., $\max_x \bar{\mathcal{K}}_M(x)$ -- to the problem of approximating the value of the mode for a KDE in $d'$ dimensions, where $d'$ depends only on $n$, $\Kf$, and the desired approximation quality. This problem of recovering the mode value is a prerequisite for the harder problem of recovering the \emph{location} of an approximate mode (i.e., a point $x^* \in \R^d$ such that $\K(x^*) \geq (1-\epsilon)\max_{x \in \R^d} \K(x)$), which is addressed in~\Cref{sec:modrec}.

We begin with an analysis for JL projections that bounds $d'$ based on generic properties of $\Kf$. Then, in~\Cref{subsec:expsmoothedreduc} we analyze these properties for specific kernels of interest, and prove that $d'$ is in fact small for these kernels -- specifically, it depends just polylogarithmically on $n$ and polynomially on the approximation factor $\epsilon$.
Our general result follows:

\begin{restatable}{theorem}{mapdownapproxmode}\label{thm:mapdownapproxmode}
Let $\K_M = (\Kf, M)$ be a $d$-dimensional KDE on a differentiable kernel as defined in~\Cref{sec:prelims} and let $0< \epsilon \leq 1$ be an approximation factor. Let $\xi \geq \xi_{\Kf}(\frac{\epsilon}{2n})$ and let $\Kf'_{\min} \leq \min_{0 \leq t \leq 2\xi} \frac{\Kf'(t) t}{\Kf(t)}$. Note that $\Kf'_{\min} \leq 0$ since $\Kf$ is assumed to be non-increasing. We can assume that $\Kf'_{\min} \neq 0$. Let $\gamma = -\frac{\epsilon}{2\Kf'_{\min}} > 0$. Then with probability $(1-\delta)$, for any  $\Pi \in \R^{w \times d}$ satisfying the $(\gamma, n+1, \delta)$-JL guarantee, we have:
\begin{equation}
\label{eq:theorem2_equation}
        (1-\epsilon) \max_{x \in \R^d} {\K}_M(x) \le \max_{x \in \R^w} {\K}_{\Pi M} (x) \le \max_{x \in \R^d} {\K}_{M}(x).
\end{equation}
Recall that a random $\Pi$ with $w = O\left(\frac{\log((n+1)/\delta)}{\min(1,\gamma^2)}\right)$ rows will satisfy the required $(\gamma, n+1, \delta)$-JL guarantee.
\end{restatable}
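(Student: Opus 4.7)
The plan is to prove the two inequalities in \eqref{eq:theorem2_equation} separately, working with the unnormalized estimator $\bar{\K}$ (equivalent since both sides are invariant to positive scaling). The upper bound does not require the smoothness hypothesis and will follow from Kirszbraun's theorem; the lower bound is the main content and combines the relative-smoothness bound $\Kf'_{\min}$ with a near/far decomposition of $M$ around the mode $x^*$.

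For the upper bound, I would observe that the left inequality $\norm{m_i-m_j}_2\leq \norm{\Pi m_i-\Pi m_j}_2$ from the JL guarantee makes the partial map $\Pi m_j\mapsto m_j$ defined on $\Pi M\subset\R^w$ into a $1$-Lipschitz map into $\R^d$. Invoking \Cref{theo:Kirs} extends it to a $1$-Lipschitz $g:\R^w\to\R^d$. For any $y\in\R^w$, taking $x=g(y)$ gives $\norm{x-m_j}_2\leq \norm{y-\Pi m_j}_2$ for every $j$, and the non-increasing monotonicity of $\Kf$ yields $\bar{\K}_M(x)\geq \bar{\K}_{\Pi M}(y)$ termwise. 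Maximizing over $y$ gives the desired direction.

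For the lower bound, let $x^*$ be a mode of $\K_M$ and apply the $(\gamma,n+1,\delta)$-JL guarantee to the $n+1$ points $M\cup\{x^*\}$, so that $\norm{\Pi x^*-\Pi m}_2^2\leq (1+\gamma)\norm{x^*-m}_2^2$ for every $m\in M$. It suffices to show $\bar{\K}_{\Pi M}(\Pi x^*)\geq (1-\epsilon)\bar{\K}_M(x^*)$. I would split the centers into a near set $N=\{m:\norm{x^*-m}_2^2\leq \xi\}$ and a far set $F=M\setminus N$. By the choice $\xi\geq \xi_\Kf(\epsilon/(2n))$, every $m\in F$ has $\Kf(\norm{x^*-m}_2^2)\leq \epsilon/(2n)$, so the total contribution of $F$ to $\bar{\K}_M(x^*)$ is at most $\epsilon/2$; dropping these points from the JL image costs at most an additive $\epsilon/2$. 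For $m\in N$, we have $(1+\gamma)\norm{x^*-m}_2^2\leq 2\xi$ (using $\gamma\leq 1$), placing the arguments inside the interval where the smoothness hypothesis applies. The inequality $(\log\Kf)'(t)=\Kf'(t)/\Kf(t)\geq \Kf'_{\min}/t$ integrated from $t$ to $(1+\gamma)t$ yields $\log(\Kf((1+\gamma)t)/\Kf(t))\geq \Kf'_{\min}\log(1+\gamma)\geq \Kf'_{\min}\gamma = -\epsilon/2$, using $\log(1+\gamma)\leq \gamma$ and $\Kf'_{\min}\leq 0$. Exponentiating and using $e^{-\epsilon/2}\geq 1-\epsilon/2$ gives the termwise bound $\Kf((1+\gamma)t)\geq (1-\epsilon/2)\Kf(t)$ on $N$.

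Combining the two buckets, $\bar{\K}_{\Pi M}(\Pi x^*)\geq (1-\epsilon/2)\bar{\K}_M(x^*) - \epsilon/2$. The final step, which I anticipate being the main obstacle because it demands careful accounting, is converting this mixed additive-multiplicative estimate into the purely multiplicative $(1-\epsilon)\bar{\K}_M(x^*)$. For this I would use $\bar{\K}_M(x^*)\geq \bar{\K}_M(m_j)\geq \Kf(0)=1$ for any $m_j\in M$, so that the additive $\epsilon/2$ is dominated by $(\epsilon/2)\bar{\K}_M(x^*)$ and the two losses combine as desired. The rest is bookkeeping: the denominator $2n$ in the definition of $\xi$ and the factor $2$ in $\gamma=-\epsilon/(2\Kf'_{\min})$ are each calibrated to allocate exactly half the target slack to near/far mass and to the integral bound, and the degenerate case $\gamma>1$ can be handled either by iterating the smoothness estimate in unit-sized steps or by noting that the dimension formula already saturates $\gamma$ at one through the $\min(1,\gamma^2)$ term.
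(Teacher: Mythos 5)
Your proposal is correct and takes essentially the same route as the paper's proof: Kirszbraun extension plus monotonicity of $\Kf$ for the upper bound, and for the lower bound evaluation at $\Pi x^*$, a near/far split at $\xi$ with the far mass bounded by $\epsilon/2$, a termwise $(1-\epsilon/2)$ bound driven by $\Kf'_{\min}$ and $\gamma = -\epsilon/(2\Kf'_{\min})$, and the normalization $\bar{\K}_M(x^*)\geq \Kf(0)=1$ to absorb the additive loss. The only minor difference is that you derive the termwise bound by integrating $\Kf'(t)/\Kf(t)\geq \Kf'_{\min}/t$ and exponentiating, while the paper uses a first-order mean-value estimate; both rest on the same implicit requirement that $(1+\gamma)\norm{x^*-m}_2^2\leq 2\xi$ on the near set, which you in fact flag more explicitly than the paper does.
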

Note that in the theorem statement above, $\Pi M = \{\Pi m : m\in M\}$ denotes the point set $M$ with dimension reduced by multiplying each point in the set by $\Pi$. Our proof of \Cref{thm:mapdownapproxmode} is included in \Cref{appendix}. It leverages Kirszbraun's Exention theorem, and follows along the same lines in \cite{LeeLiMusco:2021}. However, we need to more carefully track the effect of properties of the kernel function $\Kf$, since we do not assume that it has the simple form of a Gaussian kernel.

With \Cref{thm:mapdownapproxmode} in place, we can apply it to any non-increasing differentiable kernel to obtain a dimensionality reduction result: we just need to compute a lower bound $\Kf'_{\min} \leq \min_{0 \leq t \leq 2\xi} \frac{\Kf'(t) t}{\Kf(t)}$. For some kernels we can do so directly.
For example, consider the Cauchy kernel, $\Kf(t) = \frac{1}{1+t}$. It can be shown that we can pick $\Kf'_{\min} = -1$ (since $\Kf'(t)t/\Kf(t) \geq -1$ for all $t$). Plugging into \Cref{thm:mapdownapproxmode} we obtain:

\begin{corollary}
\label{cor:cauchy_dim_reduc}
Let $\K_m = (\Kf, M)$ be a KDE and, for any $\delta,\epsilon \in (0,1)$, let $\Pi$ be a random JL matrix with $w = O\left(\frac{\log(n/\delta)}{\epsilon^2}\right)$ rows. If $\Kf$ is a Cauchy kernel, then with probability $1-\delta$, 
    \begin{align*}
    (1-\epsilon) \max_{x \in \R^d} {\K}_M(x) \le \max_{x \in \R^w} {\K}_{\Pi M} (x) \le \max_{x \in \R^d}  {\K}_{M}(x).
    \end{align*}
\end{corollary}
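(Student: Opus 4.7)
The plan is a direct substitution into \Cref{thm:mapdownapproxmode}. The only real work is to compute, for the Cauchy kernel $\Kf(t) = 1/(1+t)$, a valid choice of $\Kf'_{\min}$ that is independent of $\xi$, so that the resulting $\gamma$ is $\Theta(\epsilon)$ and the JL dimension comes out to $O(\log(n/\delta)/\epsilon^2)$.

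First I would differentiate: $\Kf'(t) = -1/(1+t)^2$. Then the quantity appearing in the hypothesis of \Cref{thm:mapdownapproxmode} simplifies as
\begin{equation*}
\frac{\Kf'(t)\, t}{\Kf(t)} \;=\; \frac{-t/(1+t)^2}{1/(1+t)} \;=\; -\frac{t}{1+t}.
\end{equation*}
For all $t \geq 0$ this lies in $(-1, 0]$, so in particular $\frac{\Kf'(t) t}{\Kf(t)} \geq -1$ uniformly on $[0, 2\xi]$ for any $\xi \geq 0$. Thus we may take $\Kf'_{\min} = -1$, regardless of which $\xi \geq \xi_\Kf(\epsilon/2n)$ is chosen (and such a $\xi$ exists because $\Kf(t) \to 0$ as $t \to \infty$, so the critical radius is finite).

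Next, I would plug in: with $\Kf'_{\min} = -1$, the parameter in \Cref{thm:mapdownapproxmode} becomes $\gamma = -\epsilon/(2\Kf'_{\min}) = \epsilon/2$. Applying the theorem, any $\Pi$ satisfying the $(\epsilon/2, n+1, \delta)$-JL guarantee yields
\begin{equation*}
(1-\epsilon)\max_{x\in \R^d} \K_M(x) \;\leq\; \max_{x\in \R^w}\K_{\Pi M}(x) \;\leq\; \max_{x\in \R^d}\K_M(x).
\end{equation*}
By the standard JL bound stated after \Cref{def:jl}, a random $\Pi$ with
\begin{equation*}
w \;=\; O\!\left(\frac{\log((n+1)/\delta)}{\min(1,(\epsilon/2)^2)}\right) \;=\; O\!\left(\frac{\log(n/\delta)}{\epsilon^2}\right)
\end{equation*}
rows meets this requirement, which is precisely the claimed dimension.

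There is essentially no obstacle; the corollary is a one-line calculation on top of \Cref{thm:mapdownapproxmode}. The only thing worth being careful about is verifying that the bound on $\Kf'(t)t/\Kf(t)$ is uniform in $t$ (so that no $\xi$-dependent blow-up sneaks into $\Kf'_{\min}$) -- this is what makes the Cauchy case give the sharper $\log/\epsilon^2$ dependence rather than the $\log^3$ dependence appearing for the other kernels in \Cref{cor:common_recover}.
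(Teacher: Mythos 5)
Your proof is correct and matches the paper's own route: the paper likewise notes that $\Kf'(t)t/\Kf(t) = -t/(1+t) \geq -1$ for all $t \geq 0$, takes $\Kf'_{\min} = -1$ (uniformly in $\xi$), and plugs $\gamma = \epsilon/2$ into \Cref{thm:mapdownapproxmode} to get the $O(\log(n/\delta)/\epsilon^2)$ dimension. Nothing is missing.
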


In the following subsection we will describe a broader class of kernels for which we can also obtain good dimensionality reduction results, but for which bounding $\Kf'_{\min}$ is a bit more challenging.

\subsection{Relative-Distance Smooth Kernels}\label{subsec:expsmoothedreduc}
Specifically, we consider a broad class of kernels, that included the Gaussian kernel:
\begin{definition}[Relative-distance smooth kernel]
\label{def:rds}
    A non-increasing differentiable kernel $\Kf$ is \emph{relative-distance smooth} if  there exist constants $c_1,d_1,q_1,c_2,d_2 > 0$ such that
    \begin{align*}
        c_1t^{d_1} - q_1 &\leq \frac{-\Kf'(t)t}{\Kf(t)} \leq c_2t^{d_2}& &\text{ for all }& t &\geq 0.
    \end{align*}
\end{definition}
In addition to the Gaussian kernel, this class includes other kernels commonly used in practice, like the logistic, sigmoid, and generalized Gaussian kernels:
\begin{align*}
&\text{Gaussian: } t^1 \leq \frac{-\Kf'(t)t}{\Kf(t)} = t \leq t^1\\
&\text{Logistic: } \frac{{t}^{1/2}}{2}  - \frac{1}{2} \leq \frac{-\Kf'(t)t}{\Kf(t)} = \frac{(e^{\sqrt{t}} - 1)\sqrt{t}}{2(e^{\sqrt{t}} + 1)} \leq \frac{{t}^{1/2}}{2} \\
&\text{Sigmoid: } \frac{{t}^{1/2}}{2} - \frac{1}{2} \leq \frac{-\Kf'(t)t}{\Kf(t)} = \frac{(e^{2\sqrt{t}} - 1)\sqrt{t}}{2(e^{2\sqrt{t}} + 1)} \leq \frac{{t}^{1/2}}{2}\\
&\text{Generalized Gaussian: } \alpha t^\alpha \leq \frac{-\Kf'(t)t}{\Kf(t)} = \alpha t^{\alpha} \leq \alpha t^\alpha
\end{align*}

A few common non-increasing kernels, including the rational quadratic kernel, are \emph{not} relative distance smooth.
Our main result is that for \emph{any} {relative-distance smooth kernel}, we can sketch the KDE to dimension $w$ which depends only polylogarithically on $n = |M|$ and quadratically on $1/\epsilon$:
\begin{restatable}{lemma}{rdsmain}
\label{lem:rds_main}
    Let $\K_m$ be a KDE for a relative-distance smooth kernel $\Kf$ with parameters $c_1,d_1,q_1,c_2,d_2$. There is a fixed constant $c'$ such that if $\gamma = \frac{\epsilon}{c'}\log^{-d_2/d_1}\left(\frac{2n}{\epsilon}\right)$, then with probability $(1-\delta)$, for any  $\Pi \in \R^{w \times d}$ satisfying the $(\gamma, n+1, \delta)$-JL guarantee, \Cref{eq:theorem2_equation} holds. To obtain this JL guarantee, it suffices to take $\Pi$ to be a random JL matrix with $w = O\left(\frac{\log^{2d_2/d_1}(n/\epsilon)\log(n/\delta)}{\epsilon^2}\right)$ rows.
\end{restatable}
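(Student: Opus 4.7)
The plan is to invoke \Cref{thm:mapdownapproxmode} and spend the work on producing valid values of $\xi$ and $\Kf'_{\min}$ from the two-sided relative-distance smoothness inequality. Concretely, I need an upper bound on the critical radius $\xi_{\Kf}(\epsilon/2n)$ (using the \emph{lower} bound $c_1t^{d_1}-q_1$, which forces rapid decay of $\Kf$) and a lower bound on $\Kf'(t)t/\Kf(t)$ over $[0,2\xi]$ (using the \emph{upper} bound $c_2t^{d_2}$, which controls the logarithmic derivative). Once these two quantities are in hand, the required $\gamma$ and $w$ fall out mechanically.

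First I bound $\xi_\Kf(\epsilon/2n)$. The identity $\tfrac{d}{dt}\log\Kf(t) = \Kf'(t)/\Kf(t)$ combined with the RDS lower bound gives, for all $t>0$, $-\tfrac{d}{dt}\log\Kf(t) \geq c_1 t^{d_1-1} - q_1/t$. The obstacle here is the singular $-q_1/t$ term, which makes the inequality vacuous near $0$. I work around it by setting $t_0 := (2q_1/c_1)^{1/d_1}$; for $t \geq t_0$ we have $c_1 t^{d_1}-q_1 \geq \tfrac{c_1}{2}t^{d_1}$, hence $-\tfrac{d}{dt}\log\Kf(t) \geq \tfrac{c_1}{2}t^{d_1-1}$. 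Integrating from $t_0$ to $T$ and using $\Kf(t_0)\leq \Kf(0)=1$ yields
\[
\Kf(T) \;\leq\; \exp\!\left(-\tfrac{c_1}{2d_1}(T^{d_1} - t_0^{d_1})\right).
\]
Setting the right-hand side $\leq \epsilon/(2n)$ and solving gives $\xi_\Kf(\epsilon/2n) \leq \bigl(t_0^{d_1} + \tfrac{2d_1}{c_1}\log(2n/\epsilon)\bigr)^{1/d_1}$, so I may take $\xi = O\bigl(\log^{1/d_1}(n/\epsilon)\bigr)$, where the hidden constant depends only on $c_1,d_1,q_1$.

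Next I bound $\Kf'_{\min}$. The RDS upper bound gives $\Kf'(t)t/\Kf(t) \geq -c_2 t^{d_2}$ for all $t \geq 0$. Since this quantity is monotone in $t^{d_2}$ (as a lower bound), minimizing over $[0,2\xi]$ and taking the worst case yields $\Kf'(t)t/\Kf(t) \geq -c_2(2\xi)^{d_2}$, so I can set $\Kf'_{\min} = -c_2(2\xi)^{d_2}$. Substituting the bound on $\xi$ from the previous paragraph, $|\Kf'_{\min}| = O\bigl(\log^{d_2/d_1}(n/\epsilon)\bigr)$.

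Finally I assemble the pieces. \Cref{thm:mapdownapproxmode} requires $\gamma = -\epsilon/(2\Kf'_{\min})$, which now satisfies $\gamma = \Omega\bigl(\epsilon \log^{-d_2/d_1}(n/\epsilon)\bigr)$, matching the claimed value up to the absolute constant $c'$. The JL dimension bound in the theorem then gives
\[
w = O\!\left(\frac{\log((n+1)/\delta)}{\gamma^2}\right) = O\!\left(\frac{\log^{2d_2/d_1}(n/\epsilon)\,\log(n/\delta)}{\epsilon^2}\right),
\]
as claimed. The main obstacle is the first step: carefully integrating the differential inequality while managing the $-q_1$ offset so that the resulting bound on $\xi$ is genuinely polylogarithmic in $n/\epsilon$ rather than being polluted by additive constants that could blow up $(2\xi)^{d_2}$. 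Everything else is bookkeeping on the exponents $d_1,d_2$ and plugging into \Cref{thm:mapdownapproxmode}.
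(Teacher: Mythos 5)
Your proposal is correct and follows essentially the same route as the paper: bound $\xi_{\Kf}(\epsilon/2n) = O(\log^{1/d_1}(n/\epsilon))$ by integrating the log-derivative inequality coming from the lower RDS bound (the paper's \Cref{lem:xismoothexp}), then set $\Kf'_{\min} = -c_2(2\xi)^{d_2}$ from the upper RDS bound and plug into \Cref{thm:mapdownapproxmode}. The only difference is cosmetic --- you start the integration at $t_0 = (2q_1/c_1)^{1/d_1}$ to absorb the $-q_1$ offset, whereas the paper splits the integral from $0$ and later picks a constant $z$ to dominate the resulting $q_1\log t$ term; both handle the singular term equally well.
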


\Cref{lem:rds_main} is proven in \Cref{appendix}. It uses an intermediate result that bounds the $\frac{\epsilon}{2n}$-critical radius for any  relative-distance smooth kernel, which is required to invoke \Cref{thm:mapdownapproxmode}. Interestingly, the polylogarithmic factor in \Cref{lem:rds_main} only depends on the ratio of the parameters $d_2$ and $d_1$ of the relative-distance smooth kernel $\Kf$. For all of the example kernels discussed above, this ratio equals $1$, so we obtain a dimensionality reduction result exactly matching \cite{LeeLiMusco:2021} for the Gaussian kernel:
\begin{corollary}
\label{cor:common_dim_reduc}
Let $\K_m = (\Kf, M)$ be a KDE and, for any $\delta,\epsilon \in (0,1)$,  
    let $\Pi$ be a random JL matrix with $w = O\left(\frac{\log^2(n/\epsilon)\log(n/\delta)}{\epsilon^2}\right)$ rows. If $\Kf$ is a Gaussian, logistic, sigmoid kernel, or generalized Gaussian kernel, then with probability $1-\delta$, 
    \begin{align*}
    (1-\epsilon) \max_{x \in \R^d} {\K}_M(x) \le \max_{x \in \R^w} {\K}_{\Pi M} (x) \le \max_{x \in \R^d}  {\K}_{M}(x).
    \end{align*}
\end{corollary}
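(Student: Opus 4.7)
The plan is to derive Corollary 3.4 as a direct specialization of Lemma 3.3. The corollary's dimension bound $w=O(\log^2(n/\epsilon)\log(n/\delta)/\epsilon^2)$ matches the generic bound of Lemma 3.3 with $2d_2/d_1 = 2$, so it suffices to check that each listed kernel is relative-distance smooth (Definition 3.2) with parameters satisfying $d_2/d_1 = 1$. Once that is verified for each of the Gaussian, logistic, sigmoid, and generalized Gaussian kernels, one simply invokes Lemma 3.3 and reads off the dimension, with no further analysis needed.

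For the parameter checks, I would use exactly the closed-form expressions for $-\kappa'(t)t/\kappa(t)$ displayed immediately after Definition 3.2. For the Gaussian, $\kappa(t)=e^{-t}$ gives $-\kappa'(t)t/\kappa(t)=t$, so $(c_1,d_1,q_1,c_2,d_2)=(1,1,0,1,1)$. For the generalized Gaussian, $\kappa(t)=e^{-t^\alpha}$ gives $-\kappa'(t)t/\kappa(t)=\alpha t^\alpha$, so $(c_1,d_1,q_1,c_2,d_2)=(\alpha,\alpha,0,\alpha,\alpha)$. For the logistic kernel I would write
\[
\frac{-\kappa'(t)t}{\kappa(t)}=\frac{(e^{\sqrt t}-1)\sqrt t}{2(e^{\sqrt t}+1)}=\frac{\sqrt t}{2}-\frac{\sqrt t}{e^{\sqrt t}+1},
\]
which is at most $\tfrac12 t^{1/2}$ trivially, and at least $\tfrac12 t^{1/2}-\tfrac12$ provided $\sqrt t/(e^{\sqrt t}+1)\le \tfrac12$, i.e., $2s\le e^s+1$ with $s=\sqrt t$; this inequality holds on $[0,\infty)$ since its maximum of $2s-e^s-1$ occurs at $s=\ln 2$ and is negative. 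The sigmoid case is essentially identical, using $2\sqrt t$ in place of $\sqrt t$. Thus in all four cases we can take $d_1=d_2$, so $d_2/d_1=1$.

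Substituting $d_2/d_1=1$ into Lemma 3.3 yields $\gamma=(\epsilon/c')\log^{-1}(2n/\epsilon)$ and row count $w=O(\log^2(n/\epsilon)\log(n/\delta)/\epsilon^2)$, which is exactly the bound claimed in Corollary 3.4. The resulting JL guarantee then immediately produces the two-sided inequality
\[
(1-\epsilon)\max_{x\in\R^d}\mathcal K_M(x)\le \max_{x\in\R^w}\mathcal K_{\Pi M}(x)\le \max_{x\in\R^d}\mathcal K_M(x)
\]
with probability $1-\delta$, as required.

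The only nontrivial obstacle is the lower bound in the relative-distance smooth definition for the logistic and sigmoid kernels, where the additive $-q_1$ slack is essential (no purely multiplicative bound $c_1 t^{1/2}\le -\kappa'(t)t/\kappa(t)$ holds near $t=0$, because the right-hand side vanishes quadratically while $t^{1/2}$ does not). Once one allows $q_1>0$ as the definition does, the required inequality reduces to the elementary bound $2s\le e^s+1$ sketched above, and the rest of the argument is a mechanical invocation of Lemma 3.3.
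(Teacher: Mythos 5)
Your proof is correct and takes essentially the same route as the paper, which derives \Cref{cor:common_dim_reduc} by invoking \Cref{lem:rds_main} after noting that $d_2/d_1=1$ for all four kernels (with the specific relative-distance-smooth parameters displayed right after \Cref{def:rds}). The only thing you add is an explicit verification of the logistic and sigmoid lower bounds (the elementary inequality $2s \le e^s+1$), which the paper asserts without proof; note also that \Cref{def:rds} formally requires $q_1 > 0$, so for the Gaussian and generalized Gaussian you should take $q_1$ to be any small positive constant rather than $0$, though this is cosmetic since the lower bound only weakens as $q_1$ grows.
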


\section{Recovering an Approximate Mode in High Dimensions}\label{sec:modrec}

In \Cref{sec:dimred}, we discussed how to convert a high dimensional KDE into a lower dimensional KDE whose mode has an approximately equal value. However, in applications, we are typically interested in computing a point in the high-dimensional space whose value is approximately equal to the value of the mode. I.e., using our dimensionality reduced dataset, we want to find some $\tilde{x}$ such that:
\begin{align*}
    \mathcal{K}_M(\tilde{x}) \geq (1-\epsilon) \max_x \mathcal{K}_M({x}).  
\end{align*}
We present two approaches for doing so. The first is based on Kirszbraun's extension theorem and the widely used mean-shift heuristic. It extends the approach of \cite{LeeLiMusco:2021} to a wider class of kernels -- specifically to any \emph{convex and non-increasing} kernel $\Kf$. This class contains most of the relative-distance smooth kernels discussed in~\Cref{subsec:expsmoothedreduc}, including the Gaussian, sigmoid, and logistic kernels, and generalized Gaussian kernels when $\alpha \leq 1$. It also includes common kernels like the Cauchy kernel, for which we have shown a strong dimensionality reduction results, and the Epanechnikov, biweight, and triweight kernels. Recall that we define $\Kf(t)$ so that $t$ represents the \emph{squared} Euclidean distance between two points; we specifically need $\Kf$ as defined in this way to be convex.

For non-convex kernels, we briefly discuss a second approach in \Cref{sec:nonconvexrecovery} based on recent work on explicit one point extensions of Lipschitz functions \cite{BiessKontorovichMakarychev:2019}. While less computationally efficient, this approach works for any non-increasing $\Kf$. Common examples of non-convex kernels include the tricube kernel $\Kf(t) = (1-t^{3/2})^3$ \cite{Altman:1992}, $\Kf(t) = 1-t^2$ \cite{comaniciu_meer_2000}, or any generalized Gaussian kernel with $\alpha > 1$. 

\subsection{Mean-shift for Convex Kernels}
\label{sec:convexrecovery}

\begin{algorithm}[tb]
\caption{Mean-Shift Algorithm}\label{alg:cap}

\begin{algorithmic}[1]
\REQUIRE Set of $n$ points $M \subset \R^d$, number of iterations $\tau$, differentiable kernel function $\Kf$.
\STATE Select initial point $x^{(0)} \in \R^d$
\STATE For $i = 0, ..., \tau - 1$:
\vspace{-1.25em}
\begin{align*}
x^{(i + 1)} = \sum_{m\in M} m\cdot \frac{\Kf'\left(\norm{x^{(i)} - m}_2^2\right)}{\sum_{j\in M}\Kf'\left(\norm{x^{(i)} - j}_2^2\right)}
\end{align*}
\vspace{-1em}
\STATE return $x^{(\tau)}$
\end{algorithmic}
\end{algorithm}

Based on ideas proposed by Fukunaga and Hostetler \cite{FukunagaHostetler:1975}, the mean-shift method is a commonly used heuristic for finding an approximate mode \cite{Cheng:1995}. The idea behind the algorithm is to iteratively refine a guess for the mode. At each update, a new guess $x^{(i+1)}$, is obtained by computing a weighted average of all points in $M$ that define the KDE. Points that are closer to the previous guess $x^{(i)}$ are included with higher weight than points that are further. The exact choice of weights depends on the first derivative $\Kf'(t)$, where $t$ is the distance from the current mode to a point in $M$. For any non-increasing, convex kernel, $\Kf'(t)$  is non-positive and decreasing in magnitude -- i.e.,  $|\Kf'(t)|$ is largest for $t$ close to $0$, which ensures that points closest to the current guess for the mode are weighted highest when computing the new guess\footnote{Note that for the Gaussian kernel, $\Kf(t) = e^{-t}$, so $|\Kf'(t)| = \Kf(t)$. So the method presented here is equivalent to the version of mean-shift used in prior work on dimensionality reduction for mode finding \cite{LeeLiMusco:2021}}. We include pseudocode for mean-shift as \Cref{alg:cap}. The method can be alternatively viewed as an instantiation of gradient ascent for the KDE mode objective with a specifically chosen step size -- we do not discuss details here.

A powerful property of the mean-shift algorithm is that it always converges for kernels that are non-increasing and convex. In fact, it is known to provide a monotonically improving solution. Specifically:

\begin{fact}[\citet{ComaniciuMeer:2002}]
	\label{theo:Meer}
	Let $x^{(0)}\in \R^d$ be an arbitrary starting point and let $x^{(1)}, \ldots, x^{(\tau)}$ be the resulting iterates of \Cref{alg:cap} run on point set $M$ with kernel $\Kf$. 
	If $\Kf$ is convex and non-increasing, then for any $i\in 1, \ldots, \tau$:
	\begin{align*}
		\K_M(x^{(i)}) \ge \K_M(x^{(i - 1)}).
	\end{align*} 
\end{fact}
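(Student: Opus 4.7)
Write $t_m^{(i)} = \|x^{(i)}-m\|_2^2$ and recall that $\bar{\K}_M(x) = \sum_{m\in M}\Kf(\|x-m\|_2^2)$; since monotonicity in $\bar{\K}_M$ is equivalent to monotonicity in $\K_M$, it suffices to prove $\bar{\K}_M(x^{(i+1)}) \ge \bar{\K}_M(x^{(i)})$. The plan is the standard MM/EM-style majorization argument for mean shift: linearize $\Kf$ at the current iterate using convexity, show that the mean-shift update maximizes this linearization exactly, then conclude monotone ascent.

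First I will invoke convexity of $\Kf$ (as a function of squared distance) to obtain, for every $m\in M$, the tangent inequality
\begin{equation*}
   \Kf(t_m^{(i+1)}) \;\ge\; \Kf(t_m^{(i)}) + \Kf'(t_m^{(i)})\bigl(t_m^{(i+1)} - t_m^{(i)}\bigr).
\end{equation*}
Summing over $m\in M$ yields
\begin{equation*}
   \bar{\K}_M(x^{(i+1)}) - \bar{\K}_M(x^{(i)}) \;\ge\; \sum_{m\in M} \Kf'(t_m^{(i)})\bigl(\|x^{(i+1)}-m\|_2^2 - \|x^{(i)}-m\|_2^2\bigr).
\end{equation*}
Thus the whole claim reduces to showing the right-hand side is nonnegative.

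Next, define the auxiliary quadratic $g(x) = \sum_{m\in M} \Kf'(t_m^{(i)})\,\|x-m\|_2^2$. Because $\Kf$ is non-increasing, each coefficient $\Kf'(t_m^{(i)})$ is $\le 0$, so $g$ is a nonnegative combination of concave quadratics $-\|x-m\|_2^2$, hence concave in $x$. Setting $\nabla g(x)=0$ gives
\begin{equation*}
   x \;=\; \frac{\sum_{m\in M} \Kf'(t_m^{(i)})\, m}{\sum_{m\in M} \Kf'(t_m^{(i)})},
\end{equation*}
which is exactly the mean-shift update, i.e.\ $x^{(i+1)} = \argmax_x g(x)$. (If the denominator vanishes then $\Kf'(t_m^{(i)})=0$ for every $m$, so the right-hand side of the previous display is already $0$ and the update can be taken as $x^{(i)}$ itself.) Therefore $g(x^{(i+1)}) \ge g(x^{(i)})$, which is precisely the inequality we needed, and combining with the tangent bound gives $\bar{\K}_M(x^{(i+1)}) \ge \bar{\K}_M(x^{(i)})$.

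The only delicate point I anticipate is bookkeeping of signs: both convexity of $\Kf$ and non-positivity of $\Kf'$ are used in opposite directions (one to lower-bound $\Kf$, the other to make $g$ concave rather than convex), so the argument hinges on these two properties working in tandem. A minor edge case is differentiability failures of $\Kf$ at isolated points, which can be handled by replacing $\Kf'$ with a subgradient — the same tangent inequality continues to hold for convex $\Kf$, and $x^{(i+1)}$ in \Cref{alg:cap} is defined in terms of the same subgradient choice, so the argument goes through verbatim.
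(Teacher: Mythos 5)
Your proof is correct. The paper does not prove \Cref{theo:Meer} at all---it imports it as a known fact from \citet{ComaniciuMeer:2002}---and your majorization argument (lower-bound the convex profile by its tangent at the current iterate, then observe that the mean-shift update is exactly the maximizer of the resulting concave quadratic surrogate $g$, with the degenerate all-zero-weights case handled separately) is precisely the standard proof from that reference, with the signs handled correctly.
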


In \Cref{appendix}, we use this fact to prove that with a (modified) mean-shift method, run for only a single iteration, we can translate any approximate solution for a dimensionality reduced KDE problem to a solution for the original high dimensional problem. Formally, we prove the following:

\begin{restatable}{theorem}{kdemeanshift}\label{thm:kde_meanshift}
    Let $M$ be a set of points in $\R^d$ and let $\K_M = (\Kf, M)$ be a KDE defined by a shift-invariant, non-increasing, and convex kernel function $\Kf$. Let $x^* \in \argmax_x \K_M(x)$. Let $\Pi\in \R^{w\times d}$ be a JL matrix as in \Cref{def:jl} and assume that $w$ is chosen large enough so that for all $a,b$ in the set $\{x^*\}\cup M$, 
    \begin{align}
    	\label{eq:lip_pi}
    	\|a - b\|_2^2 \leq \|\Pi a - \Pi b\|_2^2 
    \end{align}
    \begin{align*}
        \text{and } (1-\epsilon) \max_{x \in \R^d} {\K}_M(x) \le \max_{x \in \R^w} {\K}_{\Pi M} (x) \le \max_{x \in \R^d} {\K}_{M}(x).
    \end{align*}
    Let $\tilde{x} \in \R^w$ be an approximate maximizer for ${\K}_{\Pi M}$ satisfying ${\K}_{\Pi M} (\tilde{x}) \geq (1-\alpha)\max_{x \in \R^w} {\K}_{\Pi M} (x)$. Then if we choose $x' = \sum_{m \in M} m\cdot \frac{\Kf'(\norm{\tilde{x} - \Pi m}^2)}{\sum_{m \in M} \Kf'(\norm{\tilde{x} - \Pi m}^2)}$, we have:
    \begin{align*}
        {\K}_{M} (x') \geq (1-\epsilon - \alpha)\max_{x \in \R^d} {\K}_{M} (x).
    \end{align*}
\end{restatable}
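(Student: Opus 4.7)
The plan is to produce a hypothetical ``witness'' point $y \in \R^d$ satisfying $\bar{\K}_M(y) \ge \bar{\K}_{\Pi M}(\tilde{x})$, via a Kirszbraun extension of the map sending each $\Pi m$ back to $m$, and then to argue that the explicitly computable point $x'$ is at least as good as $y$ by recognizing $x'$ as the unconstrained minimizer over $\R^d$ of a weighted sum of squared distances $z \mapsto \sum_m w_m \|z - m\|_2^2$. I will work throughout with the unnormalized $\bar{\K}$, which differs from $\K$ by a common positive scalar and so preserves all multiplicative approximation factors.

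For the construction, let $f \colon \Pi \cdot (M \cup \{x^*\}) \to \R^d$ send $\Pi a \mapsto a$. The JL hypothesis \eqref{eq:lip_pi} makes $f$ one-Lipschitz, so by Kirszbraun (\Cref{theo:Kirs}) it extends to a one-Lipschitz map $g \colon \R^w \to \R^d$. Set $y := g(\tilde{x})$; since $g(\Pi m) = m$ for every $m \in M$, one-Lipschitzness yields $\|y - m\|_2^2 \le \|\tilde{x} - \Pi m\|_2^2$ for all $m \in M$. Next, convexity of $\Kf$ on $[0,\infty)$ yields the tangent-line inequality $\Kf(t) \ge \Kf(t_0) + \Kf'(t_0)(t - t_0)$; I apply it with $t_0 = \|\tilde{x} - \Pi m\|_2^2$ and $t = \|z - m\|_2^2$, which is legitimate because both arguments are just non-negative real scalars even though the underlying points live in different ambient spaces. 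Summing over $m$ and setting $w_m := -\Kf'(\|\tilde{x}-\Pi m\|_2^2) \ge 0$ and $W := \sum_m w_m$, this gives the majorize--minorize inequality
\begin{align*}
\bar{\K}_M(z) \;\ge\; \bar{\K}_{\Pi M}(\tilde{x}) \;+\; \sum_{m \in M} w_m \bigl(\|\tilde{x}-\Pi m\|_2^2 - \|z - m\|_2^2\bigr) \quad \text{for every } z \in \R^d.
\end{align*}

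The definition of $x'$ makes it exactly the weighted Euclidean centroid $\sum_m (w_m/W)\, m$, hence the unique global minimizer of $z \mapsto \sum_m w_m \|z - m\|_2^2$ on $\R^d$; in particular,
\begin{align*}
\sum_m w_m \|x' - m\|_2^2 \;\le\; \sum_m w_m \|y - m\|_2^2 \;\le\; \sum_m w_m \|\tilde{x} - \Pi m\|_2^2,
\end{align*}
using the minimizer property and then the pointwise Kirszbraun bound weighted by the non-negative $w_m$. Substituting $z = x'$ into the MM inequality yields $\bar{\K}_M(x') \ge \bar{\K}_{\Pi M}(\tilde{x})$, and chaining with the assumption $\bar{\K}_{\Pi M}(\tilde{x}) \ge (1-\alpha)\max \bar{\K}_{\Pi M}$ and the sketching guarantee $(1-\epsilon)\max \bar{\K}_M \le \max \bar{\K}_{\Pi M}$ gives $\bar{\K}_M(x') \ge (1-\alpha)(1-\epsilon)\max \bar{\K}_M \ge (1-\epsilon-\alpha)\max \bar{\K}_M$, which transfers to the normalized densities. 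The most delicate step is the MM inequality: its correctness rests on the (simple but easy to overlook) fact that the one-dimensional convexity of $\Kf$ lets us use a linearization point from $\R^w$ while evaluating at a squared distance in $\R^d$. This is precisely what lets the unconstrained minimizer $x'$ dominate the non-constructive Kirszbraun witness $y$ without invoking any JL-type guarantee at the new point $x'$, which is not in the covered set $M \cup \{x^*\}$.
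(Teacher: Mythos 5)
Your proof is correct, and it takes a genuinely different route from the paper after the shared first step. Both arguments use Kirszbraun (\Cref{theo:Kirs}) on the $1$-Lipschitz map $\Pi a \mapsto a$ to obtain a witness $y = g(\tilde{x})$ with $\|y - m\|_2^2 \le \|\tilde{x} - \Pi m\|_2^2$ for all $m \in M$, and both finish with the same chaining of the $(1-\alpha)$ and $(1-\epsilon)$ factors. The paper then proceeds by a lifting trick: it pads $y$ and each center $m$ with one extra coordinate chosen so that the $(d+1)$-dimensional distances $\|x'' - m''\|_2$ \emph{exactly} equal $\|\tilde{x} - \Pi m\|_2$, observes that $x'$ is the first $d$ coordinates of a single mean-shift iterate from $x''$ on the lifted set $M''$, invokes the Comaniciu--Meer monotonicity of mean-shift (\Cref{theo:Meer}) as a black box, and finally drops the extra coordinate using that $\Kf$ is non-increasing. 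You instead prove the needed monotonicity from first principles: the convexity tangent-line bound $\Kf(t) \ge \Kf(t_0) + \Kf'(t_0)(t - t_0)$, applied with the scalar linearization point $t_0 = \|\tilde{x} - \Pi m\|_2^2$ and evaluation point $t = \|z - m\|_2^2$, gives a majorize--minimize inequality valid for all $z \in \R^d$, and since $x'$ is exactly the $w_m$-weighted centroid, i.e.\ the minimizer of $\sum_m w_m \|z - m\|_2^2$, the Kirszbraun witness $y$ certifies $\sum_m w_m\|x' - m\|_2^2 \le \sum_m w_m \|\tilde{x} - \Pi m\|_2^2$ and hence $\K_M(x') \ge \K_{\Pi M}(\tilde{x})$. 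In effect you inline (and adapt to the cross-dimensional setting) the standard proof of \Cref{theo:Meer}, which removes the need for the paper's $\R^{d+1}$ padding construction; your key observation that the linearization and evaluation arguments are just scalars is exactly what makes this work. The paper's version buys modularity (it re-uses a known fact and makes the ``one step of mean-shift'' interpretation of $x'$ explicit); yours buys a shorter, self-contained argument that is more transparent about where convexity and differentiability of $\Kf$ enter. Two shared, minor implicit assumptions in both proofs: $\Kf$ must be differentiable (its derivative appears in the definition of $x'$ anyway), and the denominator $\sum_m \Kf'(\|\tilde{x} - \Pi m\|_2^2)$ must be nonzero (equivalently $W > 0$), without which $x'$ itself is undefined; your ``unique minimizer'' claim also needs $W > 0$, though only the minimizer property is used.
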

Note that $x'$ above is set using a single-iteration of what looks like mean-shift. However, instead of using weights based on the distances of points in $M$ to a previous guess for a high-dimensional mode, we use distances between the points $\Pi M$ in our low-dimensional space to the approximate low-dimensional mode, $\tilde{x}$. 
Also note that \Cref{thm:kde_meanshift} is independent of exactly how $\tilde{x}$ is computed -- it could be computed using brute force search, using an approximation algorithm tailored to low-dimensional problems, as in \cite{LeeLiMusco:2021}, or using a heuristic like mean-shift itself. 

\begin{algorithm}
\caption{Mode Recovery for Convex Kernels}\label{alg:convex}

\begin{algorithmic}[1]
\REQUIRE Shift-invariant, non-increasing, and convex kernel function $\Kf$ with derivative $\Kf'$. Set of $n$ points $M \subset \R^d$, dimensionality reduction parameter $\gamma$, accuracy parameter $\alpha$, failure probability $\delta$.
\STATE Construct a random JL matrix $\Pi$ with $w = O\left(\frac{\log((n+1)/\delta)}{\min(1,\gamma^2)}\right)$ rows.
\STATE Construct a set of $n$ points $\Pi M \subset \R^w$ that contains $\Pi m$ for each $m\in M$.
\STATE Compute $\tilde{x}$ such that ${\K}_{\Pi M} (\tilde{x}) \geq (1-\alpha)\max_{x \in \R^w} {\K}_{\Pi M} (x)$.
\STATE Return $x' = \sum_{m \in M} m\cdot \frac{\Kf'(\norm{\tilde{x} - \Pi m}^2)}{\sum_{m \in M} \Kf'(\norm{\tilde{x} - \Pi m}^2)}$
\end{algorithmic}
\end{algorithm}

For convex kernels, \Cref{thm:kde_meanshift} implies a strengthening of \Cref{thm:mapdownapproxmode} that allows for recovering an approximate mode, not just the value of the mode. Formally, the combined dimensionality reduction and recover procedure we propose is included as \Cref{alg:convex} and we have the following result on the its accuracy: 

\begin{restatable}{corollary}{fullconvex}\label{thm:fullconvex}
Let $\K_M = (\Kf, M)$ be a $d$-dimensional shift-invariant KDE as defined in~\Cref{sec:prelims} and let $\epsilon$ and $\gamma$ (which depends on $\Kf$ and $\epsilon$) be as in \Cref{thm:mapdownapproxmode}. If $\Kf$ is differentiable, non-increasing, and convex, then \Cref{alg:convex} run with parameters $\gamma$ and $\alpha$ returns $x'$ satisfying:
\begin{align*}
    {\K}_{M} (x') \geq (1-\epsilon - \alpha)\max_{x \in \R^d} {\K}_{M} (x).
\end{align*}
\end{restatable}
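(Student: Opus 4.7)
The plan is to derive this corollary as a clean composition of the two main technical results proved earlier in this section, namely \Cref{thm:mapdownapproxmode} (the value-preservation guarantee under JL projection) and \Cref{thm:kde_meanshift} (the single-step mean-shift recovery guarantee for convex kernels). Both require the JL matrix $\Pi$ to satisfy a one-sided distortion bound on a specific set of $n+1$ points, and the dimension $w$ prescribed in Step~1 of \Cref{alg:convex} was chosen precisely to make a random $\Pi$ satisfy the $(\gamma, n+1, \delta)$-JL guarantee. So the bulk of the proof reduces to verifying that the ingredients line up.

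First I would fix an arbitrary high-dimensional maximizer $x^\ast \in \argmax_{x \in \R^d} \K_M(x)$. Crucially, $x^\ast$ is determined by $(\Kf, M)$ alone and is independent of $\Pi$, so the set $\{x^\ast\} \cup M$ has exactly $n+1$ elements that are fixed before $\Pi$ is drawn. By \Cref{def:jl} and the choice of $w$ in \Cref{alg:convex}, with probability at least $1-\delta$ the matrix $\Pi$ satisfies the $(\gamma, n+1, \delta)$-JL guarantee on this particular set, so for every pair $a,b \in \{x^\ast\} \cup M$ we have $\|a-b\|_2^2 \leq \|\Pi a - \Pi b\|_2^2 \leq (1+\gamma)\|a-b\|_2^2$. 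I would condition on this event for the remainder of the argument.

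Next, applying \Cref{thm:mapdownapproxmode} with the chosen $\gamma$ (which is the value prescribed in its statement, and which depends on $\Kf$ and $\epsilon$) yields
\begin{equation*}
    (1-\epsilon) \max_{x \in \R^d} \K_M(x) \le \max_{y \in \R^w} \K_{\Pi M}(y) \le \max_{x \in \R^d} \K_M(x).
\end{equation*}
The approximate low-dimensional mode $\tilde{x}$ produced in Step~3 of \Cref{alg:convex} therefore satisfies $\K_{\Pi M}(\tilde{x}) \geq (1-\alpha) \max_{y \in \R^w} \K_{\Pi M}(y)$ by hypothesis. The lower bound in the displayed inequality, together with the distance-preservation inequality on $\{x^\ast\} \cup M$ derived above, is precisely the pair of assumptions that \Cref{thm:kde_meanshift} requires of $\Pi$. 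Since $\Kf$ is differentiable, non-increasing, and convex, that theorem applies, and its conclusion is exactly that the point $x' = \sum_{m \in M} m \cdot \Kf'(\|\tilde{x} - \Pi m\|^2) / \sum_{m \in M} \Kf'(\|\tilde{x} - \Pi m\|^2)$ computed in Step~4 obeys $\K_M(x') \geq (1-\epsilon-\alpha) \max_{x \in \R^d} \K_M(x)$.

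I do not expect a real obstacle here: the heavy lifting was done in proving \Cref{thm:mapdownapproxmode} and \Cref{thm:kde_meanshift}. The only point requiring care is the bookkeeping observation that, even though the algorithm does not know $x^\ast$ and thus cannot explicitly feed it into any JL construction, the $(\gamma, n+1, \delta)$-JL guarantee is a union-bound-style statement that automatically covers the fixed (but unknown) set $\{x^\ast\} \cup M$; and that the $\gamma$ demanded by \Cref{thm:mapdownapproxmode} and the one-sided distortion condition demanded by \Cref{thm:kde_meanshift} are simultaneously delivered by a single JL draw with the stated $w$, so no additional union bound or blow-up in $\delta$ is needed.
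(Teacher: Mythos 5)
Your proposal is correct and follows essentially the same route as the paper's own (much terser) proof: condition on a single JL draw with the stated $w$ so that both the one-sided distortion bound on $\{x^*\}\cup M$ and the value-preservation guarantee of \Cref{thm:mapdownapproxmode} hold simultaneously, then invoke \Cref{thm:kde_meanshift} to conclude. Your added bookkeeping about $x^*$ being fixed before $\Pi$ is drawn is a fair point the paper leaves implicit, but it does not change the argument.
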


Note that Line 4 in \Cref{alg:convex} can be evaluated in $O(nw + nd)$ time. So our headline result, \Cref{cor:common_recover}, follows as a direct corollary.

\section{Solving the Low-Dimensional Problem}
\label{sec:low-dim-prob}
We next discuss a simple brute-force search method for approximate mode finding for any KDE with a continuous kernel function $\Kf$. The method has an exponential runtime dependency on the dimension, so its use for high-dimensional problems is limited, but combined with the dimensionality reduction techniques from~\Cref{sec:dimred} and the mode recovery techniques from~\Cref{sec:modrec}, it yields a quasi-polynomial mode finding algorithm for a large class of kernels.

Recall that the mode of a KDE $\K = (\Kf, M)$ with $|M|=n$ must lie within its critical area, i.e. in a ball of squared radius $\xi_{\Kf}(1/n)$ around one of the points in $M$ (where $\xi_{\Kf}(1/n)$ denotes the $1/n$-critical radius). For any $\delta > 0$ we define a finite \emph{$\delta$-covering} $\N(\K,\delta)$ to be a finite set of points such that, for every point $p$ in the critical area of $\K$, there exists a $p' \in \N(\K,\delta)$ such that $\norm{p-p'}_2^2 \leq \delta$. Formally:

\begin{restatable}{lemma}{epsnet}\label{lem:epsnet}
Given a KDE $\K = (\Kf,M)$ in $\R^d$ with $|M| = n$, and parameter $\delta > 0$, let $\xi = \xi_{\Kf}(1/n)$ and let $\N(\K,\delta)$ be a set that contains all points of the form
\begin{equation*}
    m+\sum_{i=1}^d \frac{k_i\sqrt{\delta}}{\sqrt{d}}e_i,
\end{equation*}
where $m \in M$, $k_i \in \ZZ$, and $-\frac{\sqrt{d}\xi}{\sqrt{\delta}} \leq k_i \leq \frac{\sqrt{d}\xi}{\sqrt{\delta}}$.
Above $e_i$ are the canonical base vectors of $\R^d$. Then for any point $p$ in a $\xi$-ball surrounding one of the points in $M$, there exists a point $p' \in \N(\K,\delta)$ such that $\norm{p-p'}_2^2 \leq \delta$. Moreover, we have that $|\N(\K,\delta)| = n(2\sqrt{d}\xi/\sqrt{\delta})^d$.
\end{restatable}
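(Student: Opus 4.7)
The plan is to prove both parts of the lemma by a direct nearest-grid-point rounding argument, followed by a simple counting argument. Fix any point $p$ lying in the critical area of $\K$. By definition, there exists some $m \in M$ with $\|p - m\|_2^2 \leq \xi$. Decompose $p - m = \sum_{i=1}^d a_i e_i$ along the coordinate axes. For each $i$, let $k_i$ be the nearest integer to $a_i \sqrt{d}/\sqrt{\delta}$, so that the scalar $k_i \sqrt{\delta}/\sqrt{d}$ differs from $a_i$ by at most $\sqrt{\delta}/(2\sqrt{d})$ in absolute value. The candidate approximator is then $p' = m + \sum_{i=1}^d k_i (\sqrt{\delta}/\sqrt{d}) e_i$.

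The covering bound follows from a direct coordinate-wise sum:
\begin{equation*}
\|p - p'\|_2^2 \;=\; \sum_{i=1}^d \bigl(a_i - k_i \sqrt{\delta}/\sqrt{d}\bigr)^2 \;\leq\; d \cdot \frac{\delta}{4d} \;=\; \frac{\delta}{4} \;\leq\; \delta.
\end{equation*}
I would then verify that $p' \in \N(\K,\delta)$ by checking the integer bound $|k_i| \leq \sqrt{d}\xi/\sqrt{\delta}$. Since each $|a_i| \leq \|p-m\|_2 \leq \sqrt{\xi}$, the rounded magnitude satisfies $|k_i (\sqrt{\delta}/\sqrt{d})| \leq \sqrt{\xi} + \sqrt{\delta}/(2\sqrt{d})$, which fits inside the prescribed range $[-\xi,\xi]$ in the intended parameter regime, giving $|k_i| \leq \sqrt{d}\xi/\sqrt{\delta}$ as required.

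For the cardinality claim, each center $m \in M$ contributes at most $(2\lceil \sqrt{d}\xi/\sqrt{\delta}\rceil + 1)^d$ grid points (one per admissible tuple $(k_1,\ldots,k_d) \in \ZZ^d$), which simplifies to $(2\sqrt{d}\xi/\sqrt{\delta})^d$ up to the standard absorption of additive constants. Summing over the $n$ centers yields $|\N(\K,\delta)| = n(2\sqrt{d}\xi/\sqrt{\delta})^d$.

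The argument is essentially mechanical and I do not expect any genuine obstacle; the only step requiring a bit of bookkeeping is the index-range check on $|k_i|$ at the boundary of a critical ball, where one must confirm that rounding does not push a coordinate outside the prescribed integer range. This is standard and absorbs harmlessly into the stated bound.
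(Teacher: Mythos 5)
Your proof is correct and takes essentially the same route as the paper's: decompose $p-m$ coordinate-wise and round to the grid (you round to the nearest integer, getting $\delta/4$, where the paper uses the floor and gets $\delta$), then count admissible integer tuples per center. The one loose point, the boundary check that $|k_i|\le \sqrt{d}\xi/\sqrt{\delta}$ after rounding, is handled with the same level of informality in the paper (which implicitly relies on $\sqrt{\xi}\le\xi$), so nothing substantive differs.
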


By checking every point in $\N(\K,\delta)$ and returning one that maximizes $\Kf$, we obtain the following results on finding an approximate mode, which is proven in \Cref{appendix}:
\begin{restatable}{theorem}{lowdimmod}\label{lem:lowdimmod}
    Given a KDE $\K = (\Kf,M)$ in $\R^d$ with $|M| = n$ and a precision parameter $\epsilon > 0$, let $\xi = \xi_{\Kf}(1/n)$ and let $\delta$ be at most the largest number such that $\Kf(c)-\Kf(c+\delta)\leq \epsilon\Kf(c)$ for all $c \leq \xi$. Then we can find an $\epsilon$-approximate mode in $O\left(n(2\sqrt{d}\xi/\sqrt{\delta})^d\right)$. In particular, if $d \leq O(\log^c(n))$, $\xi \leq O(n^c)$, and $\delta \geq O(n^{-c})$ for some constant $c$, then we can find an $\epsilon$-approximate mode in quasi-polynomial time in the number of data points $n$.
\end{restatable}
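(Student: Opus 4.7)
The plan is to apply Lemma \ref{lem:epsnet} to build a finite candidate set $\N(\K,\delta)$ that contains a point arbitrarily close (in squared distance) to the true mode, then evaluate $\bar\K_M$ at every candidate and return the best. The analysis breaks into two pieces: showing that the true mode lives in the critical area (so that the covering of Lemma \ref{lem:epsnet} applies), and showing that the $\delta$-resolution of the grid, together with the hypothesis on $\delta$, suffices to guarantee a $(1-\epsilon)$-approximation.

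First I would argue that any mode $x^*$ of $\K$ must lie within squared distance $\xi = \xi_\Kf(1/n)$ of some point in $M$. Since $\Kf(0)=1$, evaluating the unnormalized KDE at any $m \in M$ gives $\bar\K_M(m) \geq 1$, so $\bar\K_M(x^*) \geq 1$. Conversely, if $\|x^* - m\|_2^2 \geq \xi$ for every $m \in M$, then by monotonicity of $\Kf$ and the definition of $\xi$ we have $\Kf(\|x^*-m\|_2^2) \leq 1/n$, so $\bar\K_M(x^*) \leq 1$ with strict inequality unless we can simply return a point in $M$ instead. Hence $x^*$ lies in the critical area, and Lemma \ref{lem:epsnet} produces a grid point $p' \in \N(\K,\delta)$ with $\|p' - x^*\|_2^2 \leq \delta$.

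Second, I would show $\bar\K_M(p') \geq (1-\epsilon)\,\bar\K_M(x^*)$. The triangle inequality gives $\|p'-m\|_2 \leq \|x^*-m\|_2 + \sqrt{\delta}$ for every $m$, which together with the fact that $\Kf$ is non-increasing in squared distance lets us bound $\Kf(\|p'-m\|_2^2)$ below by $\Kf$ evaluated at a slightly larger argument than $\|x^*-m\|_2^2$. The hypothesis on $\delta$ then guarantees a pointwise relative loss of at most $\epsilon$ for each $m$ whose squared distance to $x^*$ is at most $\xi$; summing over such $m$ yields $\sum_{m\,\text{near}} \Kf(\|p'-m\|_2^2) \geq (1-\epsilon)\sum_{m\,\text{near}} \Kf(\|x^*-m\|_2^2)$. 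For $m$ outside the critical area, $\Kf(\|x^*-m\|_2^2) \leq 1/n$, so their total contribution to $\bar\K_M(x^*)$ is at most a constant that can be folded into the error budget using $\bar\K_M(x^*) \geq 1$.

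Finally, the algorithm itself is brute force: enumerate all $|\N(\K,\delta)| = n(2\sqrt{d}\xi/\sqrt{\delta})^d$ grid points, evaluate $\bar\K_M$ at each in $O(nd)$ time, and return the argmax; absorbing lower-order factors yields the claimed $O\!\bigl(n(2\sqrt{d}\xi/\sqrt{\delta})^d\bigr)$ runtime. The quasi-polynomial corollary follows by substituting $d \leq O(\log^c n)$, $\xi \leq O(n^c)$, $\delta \geq \Omega(n^{-c})$ into this bound, giving $(2\sqrt{d}\xi/\sqrt{\delta})^d = 2^{O(d\log n)} = 2^{O(\log^{c+1} n)}$. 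The main obstacle is the second step: the clean hypothesis $\Kf(c) - \Kf(c+\delta) \leq \epsilon\,\Kf(c)$ controls additive shifts in squared distance by exactly $\delta$, whereas the triangle inequality on Euclidean distance introduces a cross-term $2\sqrt{\delta}\|x^*-m\|_2$ when passing to squared distances. Handling this cleanly requires either a careful near/far split as sketched above, or a slight tightening of $\delta$ to absorb the cross-term uniformly over the critical area.
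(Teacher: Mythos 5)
Your plan is the same as the paper's: build the $\delta$-covering $\N(\K,\delta)$ from \Cref{lem:epsnet}, argue the mode lies in the critical area, take the nearest grid point $p'$, and bound the loss $\bar\K_M(p)-\bar\K_M(p')$ via the hypothesis on $\delta$. So the approach is not new. What is notable is that you have correctly flagged a gap that the paper's own proof silently steps over. The paper's key display begins $\Kf(\|m-p'\|_2^2)\ge\Kf(\|m-p\|_2^2+\|p-p'\|_2^2)$, which by monotonicity of $\Kf$ amounts to asserting $\|m-p'\|_2^2\le\|m-p\|_2^2+\|p-p'\|_2^2$. This is not the triangle inequality: by the law of cosines $\|m-p'\|_2^2=\|m-p\|_2^2+\|p-p'\|_2^2-2\langle m-p,\,p'-p\rangle$, and the inner product can have either sign (the floor-rounding in \Cref{lem:epsnet} fixes the sign of $p-p'$ but not of $m-p$, and there is no way to choose $p'$ to make the cross term non-positive for every $m\in M$ simultaneously). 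So the paper's first step is exactly the ``cross-term'' problem you identified. You also correctly note a second gap, again present in the paper: the hypothesis $\Kf(c)-\Kf(c+\delta)\le\epsilon\Kf(c)$ is only assumed for $c\le\xi$, but the paper applies it termwise to every $m\in M$, including those with $\|m-p\|_2^2>\xi$.

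Both gaps are repairable, and you should not leave them as acknowledged ``obstacles'' --- they need to be closed for the statement to stand. For the cross term, use a finer net so that $\|p-p'\|_2\le\delta'$ with, say, $\delta'\le\delta/(3\sqrt{\bar\xi})$ where $\bar\xi=\max(1,\xi)$: then for $m$ in the critical-area range the shift in squared distance is $2\delta'\|m-p\|_2+\delta'^2\le 2\delta'\sqrt{\xi}+\delta'^2\le\delta$, so the hypothesis on $\delta$ applies. This only changes the grid count by a factor $(c\bar\xi)^{d/2}$ and still gives quasi-polynomial time under the stated assumptions. For the far-$m$ terms, drop them (they are nonnegative on the $p'$ side) and note that replacing $\xi_{\Kf}(1/n)$ by $\xi_{\Kf}(\epsilon/n)$ makes their total contribution at most $\epsilon\le\epsilon\,\bar\K_M(p)$, which can be folded into the error budget as you suggest. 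With those two repairs your argument is complete and in fact tighter than the paper's.
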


Our headline result, \Cref{lemma:quais}, follows by combining the dimensional reduction guarantee of \Cref{lem:rds_main} with the observation that for $\bar{\xi} = \max(1, \xi_{\Kf}(1/n))$, choosing 
$\delta = \min\left(\left(\frac{d_2}{c_2}\epsilon\right)^{1/d_2},\frac{\epsilon}{c_2}(2\bar{\xi})^{1-d_2}\right)$
satisfies the requirement of \Cref{lem:lowdimmod} for any relative-distance smooth kernel $\Kf$ with parameters $c_1,d_1,q_1,c_2,d_2$. Moreover, as established in \Cref{lem:xismoothexp}, $\xi_{\Kf}(\frac{1}{n}) \leq c \log^{1/d_1} n$, so we have that the runtime in \Cref{lem:lowdimmod} is $O(n (\log^c n)^d)$ for constant $\epsilon$. 
The claim in \Cref{lemma:quais} for the Cauchy kernel follows by noting that $\xi = n$ and we can take $\delta = \frac{1}{\epsilon}$ in \Cref{lem:lowdimmod}. Finally, note that \Cref{lemma:quais} also includes the polynomial time cost of multiplying the original data set by a random JL matrix.

Overall, we conclude that one can compute an approximate mode in quasi-polynomial time for the Cauchy kernel, or any KDE on a relative-distance smooth kernel, and in particular the approximate mode finding problem for KDEs on Gaussian, logistic, sigmoid, or generalized Gaussian kernels can be solved in quasi-polynomial time. 

\section{Hardness Results}
\label{sec:hardness}

The results from the previous sections place the approximate mode finding problem in quasi-polynomial time for a large class of kernels.
The question arises whether we can do much better;
in this section, we provide some preliminary negative evidence for this possibility. Specifically, 
we prove NP-hardness of finding an approximate mode of a box kernel KDE, where we recall that this kernel takes the form $\Kf(t) = 1$ for $|t| \leq 1$ and $\Kf(t) = 0$ otherwise. Our hardness result is based on the hardness of the $k$-clique problem:
\begin{problem}[$k$-clique]\label{prob:kclique}
Given a $\Delta$-regular graph $G$ and an integer $k$, does $G$ have a complete $k$-vertex subgraph?
\end{problem}
\Cref{prob:kclique} is known to be NP-hard when $k$ is a parameter of the input.
We show how to reduce this problem to KDE mode finding using a reduction inspired by work of Shenmaier on the $k$-enclosing ball problem \cite{Shenmaier:2015}. 
We start by creating a point set given an input $G$ to \Cref{prob:kclique}. Specifically, we embed $G$ in $\R^{|E|}$ as follows: let $P$ to be the set of rows of the incidence matrix of $G$, i.e. the matrix $B$ such that $B_{v,e} = 1$ if $e$ is an edge adjacent to the node $v$ and $B_{v,e} = 0$ otherwise \cite{Shenmaier:2015}. See \Cref{fig:regular} for an example.

\begin{figure}[h]
    \centering
        \vspace{-.5em}
    \includegraphics[width=0.45\textwidth]{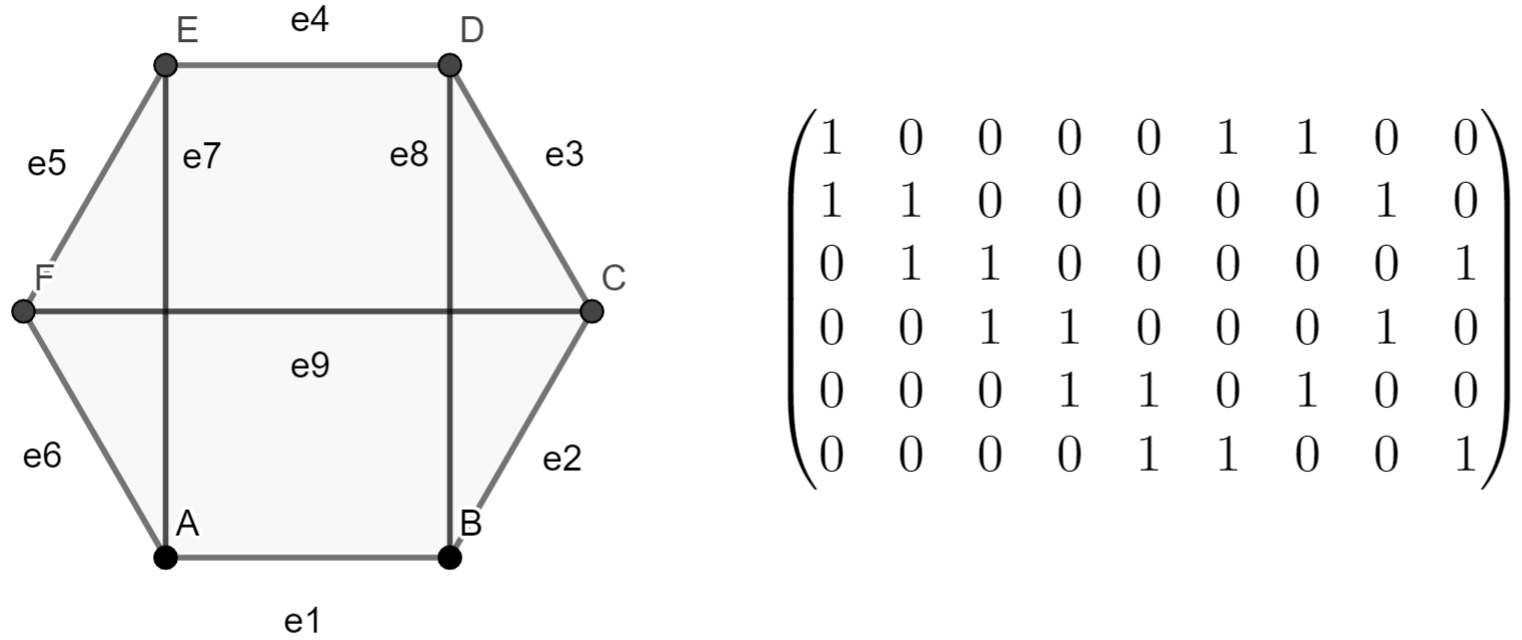}
    \vspace{-1em}
    \caption{An simple $3$-regular graph and its incident matrix $B$.}
    \label{fig:regular}
\end{figure}

We will base our hardness result on the following lemma:
\begin{lemma}[\citet{Shenmaier:2015}]\label{shenmaierlemma}
Given a $\Delta$-regular graph $G = (V,E)$ and integer $k$, let $P$ be defined as above. Let $A = (1-1/k)(\Delta-1)$ and let $R$ be the radius of the smallest ball containing $\geq k$ points in $P$. Then $R^2 \leq A$ if there is a $k$-clique in $G$, and $R^2 \geq A+2/k^2$ otherwise.
\end{lemma}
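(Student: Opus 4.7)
The plan is to translate the combinatorial structure of the incidence embedding into explicit pairwise squared distances, and then reason separately about the two cases via two standard tools: the formula for the circumradius of a regular simplex, and the centroid-based lower bound on the radius of a smallest enclosing ball.

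The first step is to compute pairwise distances in $P$. Each row $p_v \in \R^{|E|}$ of the incidence matrix has exactly $\Delta$ ones, since $G$ is $\Delta$-regular. For any two vertices $u,v$, the squared Euclidean distance $\|p_u - p_v\|_2^2$ counts the number of edges incident to exactly one of $u,v$. Thus $\|p_u - p_v\|_2^2 = 2(\Delta-1)$ when $\{u,v\} \in E$ (the shared edge cancels), and $\|p_u - p_v\|_2^2 = 2\Delta$ when $\{u,v\}\notin E$. These two possible values are the heart of the argument.

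For the \emph{completeness} direction ($R^2\le A$ when a $k$-clique exists), I would take the $k$ points $p_{v_1},\dots,p_{v_k}$ corresponding to the clique vertices. By the distance computation above, these points are pairwise equidistant with squared distance $2(\Delta-1)$, so they form a regular simplex in $\R^{|E|}$. The smallest enclosing ball of a regular simplex of $k$ points with pairwise squared distance $D$ is centered at the centroid and has squared radius $\tfrac{(k-1)}{2k}\,D$; I would derive this quickly by setting $r^2 = \|p_i-\bar p\|_2^2$, using $\sum_j(p_j-\bar p)=0$ to get $\langle p_i-\bar p,p_j-\bar p\rangle = -r^2/(k-1)$, and reading off $D = 2r^2 k/(k-1)$. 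Plugging in $D = 2(\Delta-1)$ gives $r^2 = (1-1/k)(\Delta-1) = A$, which bounds $R^2 \le A$.

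For the \emph{soundness} direction ($R^2 \ge A + 2/k^2$ when no $k$-clique exists), the key tool is the inequality
\begin{equation*}
    R^2 \;\ge\; \frac{1}{k^2}\sum_{i<j}\|p_i - p_j\|_2^2
\end{equation*}
for any $k$ points in $P$, where $R$ is the radius of their smallest enclosing ball. This follows because if $c^*$ is the ball's center then $\sum_i \|p_i-c^*\|_2^2 \le kR^2$, while the centroid minimizes this quantity and $\sum_i \|p_i - \bar p\|_2^2 = \tfrac1k\sum_{i<j}\|p_i-p_j\|_2^2$ is a standard identity. Now for any $k$ vertices that do \emph{not} form a clique, at least one pair is non-adjacent and hence contributes $2\Delta$ instead of $2(\Delta-1)$ to the sum, giving $\sum_{i<j}\|p_i-p_j\|_2^2 \ge (\binom{k}{2}-1)\cdot 2(\Delta-1) + 2\Delta = k(k-1)(\Delta-1) + 2$. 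Dividing by $k^2$ yields $R^2 \ge A + 2/k^2$, as required.

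The main obstacle, such as it is, is getting the regular-simplex circumradius clean; the rest is bookkeeping. One subtle point worth stating carefully is that the smallest enclosing ball of the whole set $P$ need not be centered on an affine span of a clique, so the completeness direction genuinely needs the upper bound via an \emph{explicit} clique-centered ball, while the soundness direction needs the lower bound to apply to \emph{every} choice of $k$ points simultaneously — which the centroid inequality does, since it holds for any $k$-subset.
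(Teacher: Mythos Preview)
Your proof is correct. The paper does not give its own proof of this lemma; it is quoted verbatim from \citet{Shenmaier:2015} and used as a black box in the reduction of \Cref{thm:boxhard}. So there is nothing in the paper to compare against, but what you have written is exactly the argument one finds in Shenmaier's work: compute the two possible pairwise squared distances $2(\Delta-1)$ and $2\Delta$ in the incidence embedding, use the regular-simplex circumradius for the clique case, and use the centroid/variance lower bound $R^2 \ge \tfrac{1}{k^2}\sum_{i<j}\|p_i-p_j\|_2^2$ for the non-clique case. Both steps and the arithmetic check out.
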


By rescaling $P$ we can show NP-hardness of the KDE mode finding problem for box kernels:
\begin{restatable}{theorem}{boxhard}\label{thm:boxhard}
The problem of computing a $\frac{1}{n}$-approximate mode of a box kernel KDE is NP-hard.
\end{restatable}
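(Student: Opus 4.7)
\textbf{Proof proposal for \Cref{thm:boxhard}.}
The plan is to reduce the $k$-clique problem on $\Delta$-regular graphs to $\tfrac{1}{n}$-approximate mode finding for the box kernel, by composing Shenmaier's geometric embedding (\Cref{shenmaierlemma}) with an appropriate global rescaling. Given an instance $(G,k)$ of $k$-clique with $G$ a $\Delta$-regular graph on $n$ vertices, I form the incidence-matrix point set $P \subset \R^{|E|}$ exactly as in the excerpt, and then define a rescaled point set $M = c P$ for a carefully chosen $c > 0$. Since the box kernel satisfies $\Kf(\|x-m\|_2^2)=1$ when $\|x-m\|_2^2\le 1$ and $0$ otherwise, the unnormalized mode value $\max_x \bar{\K}_M(x)$ is exactly the maximum number of points of $M$ contained in any Euclidean unit ball. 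Hence if I can choose the scale so that ``the smallest ball containing $k$ points has squared radius $\le 1$'' on the clique side, and ``strictly $>1$'' on the no-clique side, then the box-kernel mode value exactly encodes the decision ``does $G$ contain a $k$-clique?''.

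For the scaling, I set $c^2 = 1/(A+1/k^2)$ with $A=(1-1/k)(\Delta-1)$, so $c^2$ is a polynomial-size rational and all squared distances of $M$ can be represented exactly in polynomial time. Applying \Cref{shenmaierlemma} to $P$ and multiplying squared radii by $c^2$:
\begin{align*}
\text{clique: } & R^2 \le A \cdot c^2 \;=\; \frac{A}{A+1/k^2} \;<\; 1,\\
\text{no clique: } & R^2 \ge (A+2/k^2)\cdot c^2 \;=\; \frac{A+2/k^2}{A+1/k^2} \;>\; 1.
\end{align*}
Therefore $\max_x \bar{\K}_M(x) \ge k$ when $G$ has a $k$-clique, and $\max_x \bar{\K}_M(x) \le k-1$ otherwise, since no unit ball can contain $k$ points of $M$ in the no-clique case.

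To convert this integer gap into one that survives a $(1-\tfrac{1}{n})$ approximation, note that the number of base points is $n=|M|=|V(G)|$, and we may assume $k<n$ (the case $k=n$ asks whether $G$ is complete and is trivial; $k$-clique on $\Delta$-regular graphs remains NP-hard under this restriction). Any $\tfrac{1}{n}$-approximate mode $\tilde x$ satisfies
\begin{equation*}
\bar{\K}_M(\tilde x) \;\ge\; (1-\tfrac{1}{n})\max_x \bar{\K}_M(x).
\end{equation*}
On the clique side this gives $\bar{\K}_M(\tilde x)\ge (1-\tfrac{1}{n})k = k - k/n > k-1$, and since $\bar{\K}_M$ takes integer values for the box kernel, this forces $\bar{\K}_M(\tilde x)\ge k$. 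On the no-clique side we still have $\bar{\K}_M(\tilde x)\le k-1$. Thus the single test ``is $\bar{\K}_M(\tilde x)\ge k$?'' decides $k$-clique, and this test is clearly evaluable in polynomial time given $\tilde x$.

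The construction $P\mapsto cP$, and evaluation of $\bar{\K}_M$ at a candidate point, are all polynomial-time, so the reduction is a Karp reduction from $k$-clique to $\tfrac{1}{n}$-approximate box-kernel mode finding, yielding NP-hardness. The main technical point to be careful about is quantitative: the two-sided gap $A$ vs.\ $A+2/k^2$ provided by \Cref{shenmaierlemma} must be sharp enough, after rescaling, to survive multiplication by $(1-1/n)$, and this is exactly why the scaling uses the midpoint $c^{-2}=A+1/k^2$ together with the assumption $k<n$; one should also verify that $k$-clique on $\Delta$-regular graphs (as required by Shenmaier's lemma) is NP-hard, which follows from standard gadget reductions from general $k$-clique.
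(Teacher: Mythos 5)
Your proposal is correct and takes essentially the same approach as the paper: reduce from $k$-clique via Shenmaier's incidence-matrix embedding and Lemma~\ref{shenmaierlemma}, rescale the point set so the clique/no-clique dichotomy becomes ``some unit ball contains $k$ points'' versus ``every unit ball contains at most $k-1$,'' and observe that the integer gap $k$ vs.\ $k-1$ survives a $(1-1/n)$-approximation when $k<n$. The only differences are cosmetic: you rescale by $c^2 = 1/(A+1/k^2)$ rather than the paper's $1/A$ (both work), and you spell out the $k<n$ assumption and integrality argument a bit more explicitly than the paper does.
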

\begin{proof}
The proof follows almost directly from \Cref{shenmaierlemma}. 
Note that the value of the mode of a box kernel KDE is given by the largest number of centers in a ball of radius 1. 
Let $G$ be an instance of \Cref{prob:kclique}, and let $P$ be the set of rows of the incidence matrix of $G$ as described above. 
Now let $M = \{p/\sqrt{A} \; \mid \; p \in P\}$. From the lemma, we know that there is a ball of radius $1$ containing $k$ points if $G$ has a $k$-clique, so $\max_x\bar{\K}_M(x) \geq k$. On the other hand, if $G$ does not have a $k$-clique then every ball of radius $1$ contains at most $k-1$ points, i.e., $\max_x \bar{\K}_M \leq k - 1$. So, an approximation algorithm with error $\epsilon = 1/k \geq 1/n$ can distinguish between the two cases. 
Hence, the ($\epsilon$-approximate) mode finding problem for box kernel KDEs is at least as hard as \Cref{prob:kclique} when $\epsilon \leq 1/n$.
\end{proof}

While it provides an initial result and hints at why the mode finding problem might be challenging, the above hardness result leaves a number of open questions. First off, it does not rule out a constant factor approximation, or a method whose dependence on the approximation parameter $\epsilon$ is exponential (as in our quasi-polynomial time methods). Moreover, the result does not apply for kernels like the Gaussian kernel -- it strongly requires that the value of the box kernel differs significantly between $t=1$ and $t = 1 + \frac{1}{k^2}$. Proving stronger hardness of approximation for the box kernel, or any hardness for kernels used in practice (like a relative-distance smooth kernel) are promising future directions.

\section{Experiments}\label{sec:experiments}
We support our theoretical results with experiments on two datasets, MNIST (60000 data points, 784 dimensions) and Text8 (71290 data points, 300 dimensions). We use both the Gaussian and Generalized Gaussian kernels with a variety of different bandwidths, $\sigma$. A bandwidth of $\sigma$ means that the kernel function as definied in \Cref{sec:prelims} was applied to values $t = \frac{\|m - x\|_2^2}{\sigma^2}$. In general, a large $\sigma$ leads to larger mode value. It also leads to a smoother KDE, which is intuitively easier to maximize. We chose values of $\sigma$ that lead to substantially varying mode values to check the performance of our method across a variety of optimization surfaces.

Since these are high-dimensional datasets, it is not computationally feasible to find an exact mode to compare against. Instead, we obtain a baseline mode value by running the mean-shift heuristic (gradient descent) for 100 iterations, with 60 randomly chosen starting points. To avoid convergence to local optima at KDE centers, these starting points were chosen to be random linear combinations of either all dataset points, or a random pair of points in the data set. The best mode value found was taken as a baseline. 

Once establishing a baseline, we applied JL dimensionality reduction to each data set and kernel for a variety of sketching dimensions, $w$. Again, for efficiency, we use mean-shift to find an approximate low-dimensional mode, instead of the brute force search method from \Cref{sec:low-dim-prob}. We ran for 10 iterations with 30 random restarts, chosen as described above. To recover a high-dimensional mode from our approximate low-dimensional mode, we use \Cref{alg:convex}, since the kernels tested are convex. For each dimension $w$, we ran $10$ trials and report the mean and standard deviation of the KDE value of our approximate mode. Results are included in Figures 2-5. Note that, for visual clarity, the y-axis in these figures does not start at zero.

\begin{figure}[t]
    \centering
    \includegraphics[width=0.48\textwidth]{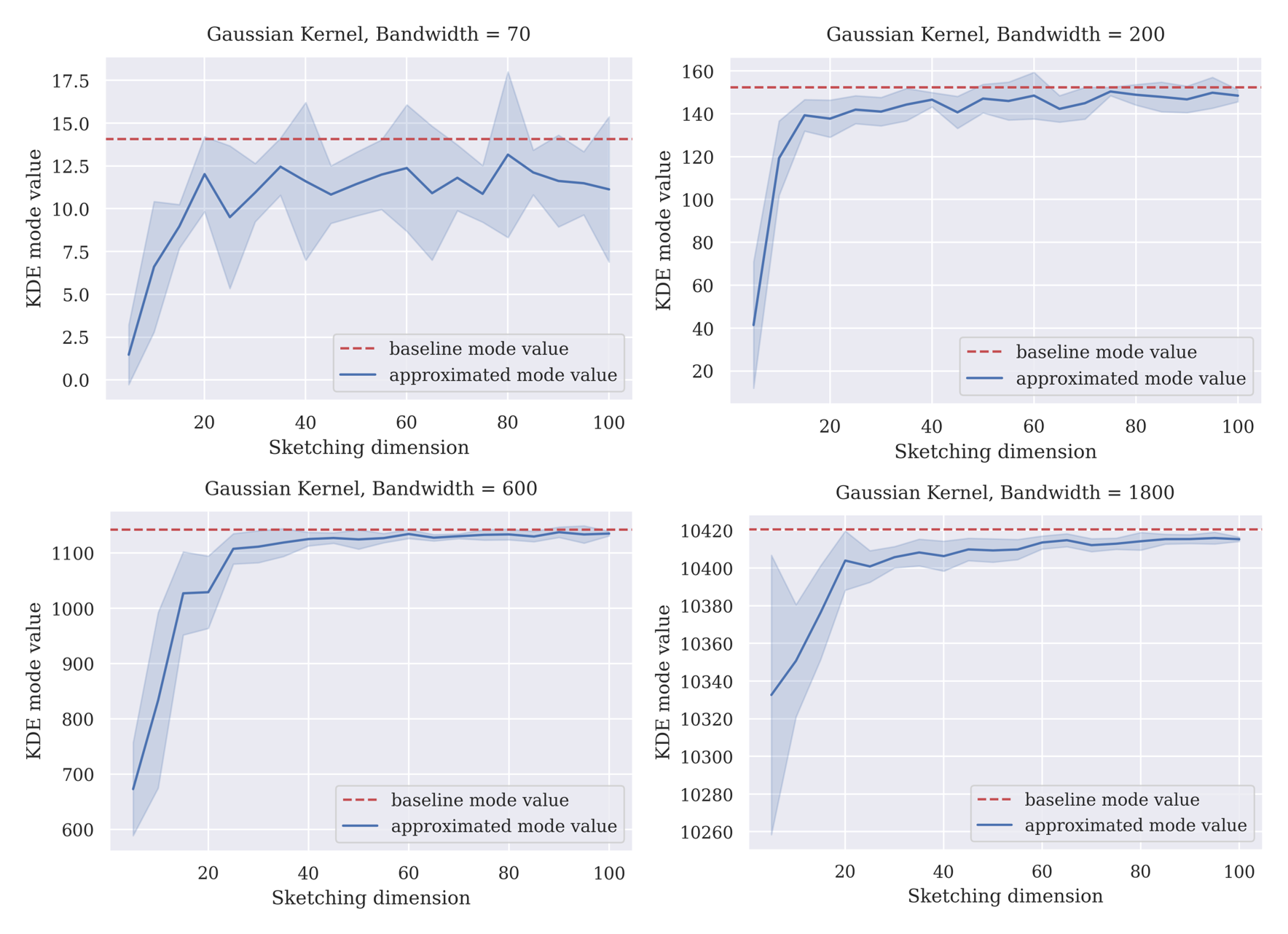}
                        \vspace{-2em}
                        
    \caption{MNIST data using a Gaussian kernel with bandwidths 70, 200, 600, and 1800.}
    \label{fig:mnist_gaussian}
\end{figure}

\begin{figure}[t]
    \centering
    \includegraphics[width=0.48\textwidth]{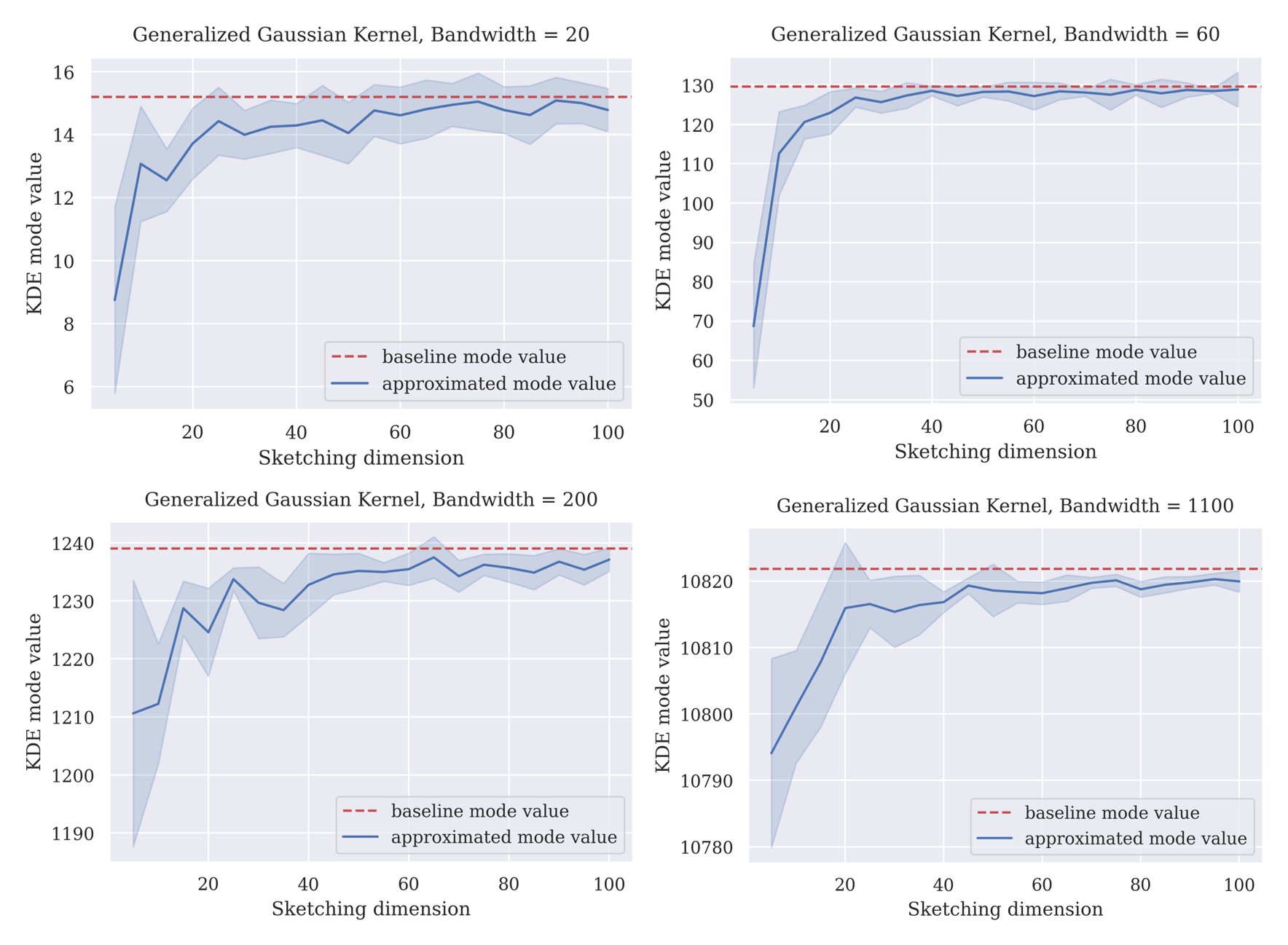}
\vspace{-2em}

        \caption{MNIST data using a Generalized Gaussian kernel with parameter $\alpha = .5$ and bandwidths 20, 60, 200, and 1100.}
    \label{fig:mnist_sigma_gaussian}
\end{figure}

\begin{figure}[h]
    \centering
    \includegraphics[width=0.48\textwidth]{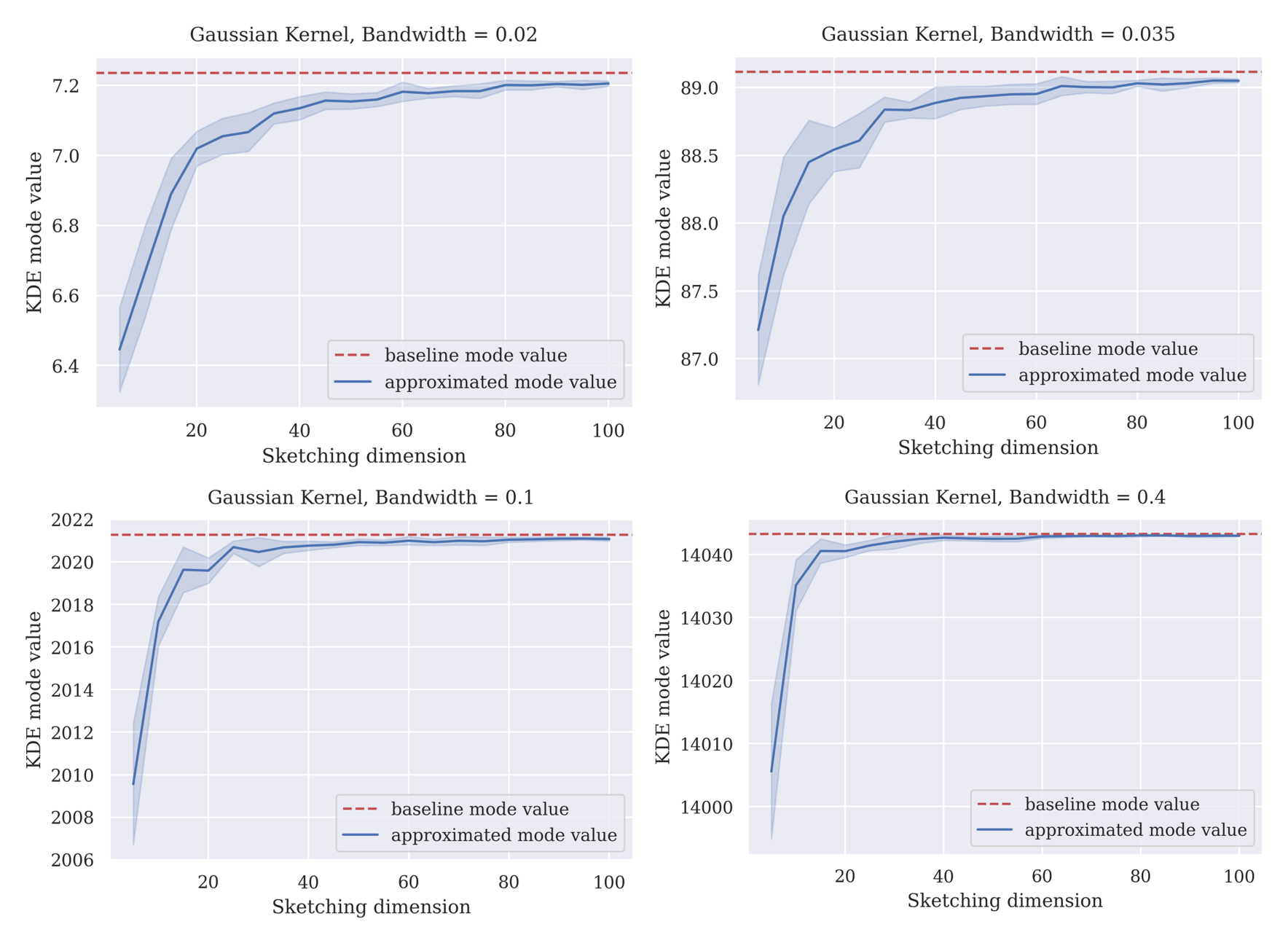}
\vspace{-2em}

            \caption{Text8 data using a Gaussian kernel with parameter with bandwidths .02, .035, .1, and .4.}
    \label{fig:text8_gaussian}
\end{figure}

\begin{figure}[h]
    \centering
    \includegraphics[width=0.48\textwidth]{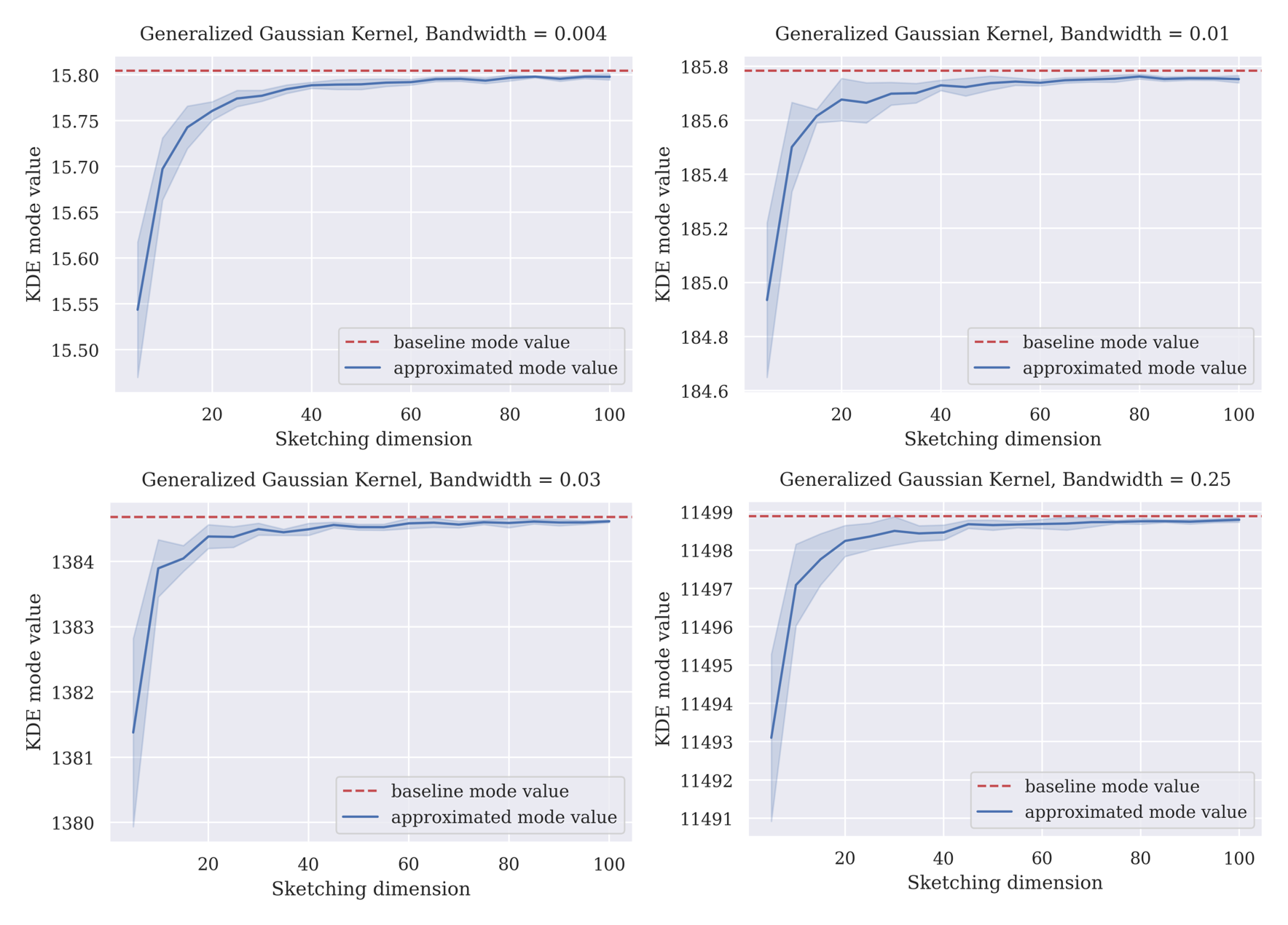}
     \vspace{-2.5em}

    \caption{Text8 data using a Generalized Gaussian kernel with parameter $\alpha = .5$ and bandwidths .004, .01, .03, and .25.}
    \label{fig:text8_sigma_gaussian}
\end{figure}

As apparent from the plots, our Johnson-Lindenstrauss dimensionality reduction approach combined with the mean-shift heuristic performs very well overall. In all cases, it was able to recover an approximate mode with value close to the baseline with sketching dimension $w\ll d$. As expected, performance improves with increasing sketching dimension.





\section{Acknowledgements}
This work was partially funded through NSF Award No. 2045590. Cas Widdershoven's work has been partially funded through the CAS Project for Young Scientists in Basic Research, Grant No. YSBR-040.

\appendix

\nocite{langley00}

\bibliography{arxiv_full_version.bib}

\begin{thebibliography}{31}
\providecommand{\natexlab}[1]{#1}
\providecommand{\url}[1]{\texttt{#1}}
\expandafter\ifx\csname urlstyle\endcsname\relax
  \providecommand{\doi}[1]{doi: #1}\else
  \providecommand{\doi}{doi: \begingroup \urlstyle{rm}\Url}\fi

\bibitem[Achlioptas(2001)]{Achlioptas:2001}
Achlioptas, D.
\newblock Database-friendly random projections.
\newblock In \emph{Proceedings of the Twentieth ACM SIGMOD-SIGACT-SIGART
  Symposium on Principles of Database Systems}, pp.\  274--281, 2001.

\bibitem[Ailon \& Chazelle(2009)Ailon and Chazelle]{AilonChazelle:2009}
Ailon, N. and Chazelle, B.
\newblock The fast johnson-lindenstrauss transform and approximate nearest
  neighbors.
\newblock \emph{SIAM J. Comput.}, pp.\  302--322, 2009.

\bibitem[Altman(1992)]{Altman:1992}
Altman, N.~S.
\newblock An introduction to kernel and nearest-neighbor nonparametric
  regression.
\newblock \emph{The American Statistician}, 46\penalty0 (3):\penalty0 175--185,
  1992.

\bibitem[Barman(2015)]{Barman:2015}
Barman, S.
\newblock Approximating nash equilibria and dense bipartite subgraphs via an
  approximate version of caratheodory's theorem.
\newblock In \emph{Proceedings of the Forty-Seventh Annual ACM Symposium on
  Theory of Computing}, pp.\  361--369, 2015.

\bibitem[Biess et~al.(2019)Biess, Kontorovich, Makarychev, and
  Zaichyk]{BiessKontorovichMakarychev:2019}
Biess, A., Kontorovich, A., Makarychev, Y., and Zaichyk, H.
\newblock Regression via kirszbraun extension with applications to imitation
  learning.
\newblock \emph{{ArXiv} Preprint}, abs/1905.11930, 2019.
\newblock URL \url{http://arxiv.org/abs/1905.11930}.

\bibitem[Blum et~al.(2019)Blum, Har-Peled, and
  Raichel]{BlumHar-PeledRaichel:2019}
Blum, A., Har-Peled, S., and Raichel, B.
\newblock Sparse approximation via generating point sets.
\newblock \emph{ACM Trans. Algorithms}, 15\penalty0 (3), 6 2019.

\bibitem[Botev et~al.(2010)Botev, Grotowski, and
  Kroese]{BotevGrotowskiKroese:2010}
Botev, Z.~I., Grotowski, J.~F., and Kroese, D.~P.
\newblock Kernel density estimation via diffusion.
\newblock \emph{The annals of Statistics}, 38\penalty0 (5):\penalty0
  2916--2957, 2010.

\bibitem[{Carreira-Perpi\~{n}\'{a}n}(2000)]{Carreira-Perpinan:2000}
{Carreira-Perpi\~{n}\'{a}n}, M.~A.
\newblock Mode-finding for mixtures of {G}aussian distributions.
\newblock \emph{IEEE Transactions on Pattern Analysis and Machine
  Intelligence}, 22\penalty0 (11):\penalty0 1318--1323, 2000.

\bibitem[{Carreira-Perpi\~{n}\'{a}n}(2007)]{Carreira-Perpinan:2007}
{Carreira-Perpi\~{n}\'{a}n}, M.~A.
\newblock Gaussian mean-shift is an {EM} algorithm.
\newblock \emph{IEEE Transactions on Pattern Analysis and Machine
  Intelligence}, 29:\penalty0 767--776, 2007.

\bibitem[Cheng(1995)]{Cheng:1995}
Cheng, Y.
\newblock Mean shift, mode seeking, and clustering.
\newblock \emph{IEEE Transactions on Pattern Analysis and Machine
  Intelligence}, 17\penalty0 (8):\penalty0 790--799, 1995.

\bibitem[Cleveland \& Devlin(1988)Cleveland and Devlin]{ClevelandDevlin:1988}
Cleveland, W.~S. and Devlin, S.~J.
\newblock Locally weighted regression: An approach to regression analysis by
  local fitting.
\newblock \emph{Journal of the American Statistical Association}, 83\penalty0
  (403):\penalty0 596--610, 1988.

\bibitem[Comaniciu \& Meer(2002)Comaniciu and Meer]{ComaniciuMeer:2002}
Comaniciu, D. and Meer, P.
\newblock Mean shift: a robust approach toward feature space analysis.
\newblock \emph{IEEE Transactions on Pattern Analysis and Machine
  Intelligence}, 24\penalty0 (5):\penalty0 603--619, 2002.

\bibitem[Comaniciu(2000)]{comaniciu_meer_2000}
Comaniciu, D.~I.
\newblock \emph{Nonparametric robust methods for computer vision}.
\newblock PhD thesis, 2000.

\bibitem[Dasgupta \& Gupta(2003)Dasgupta and Gupta]{DasguptaGupta:2003}
Dasgupta, S. and Gupta, A.
\newblock An elementary proof of a theorem of johnson and lindenstrauss.
\newblock \emph{Random Struct. Algorithms}, 22\penalty0 (1):\penalty0 60--65, 1
  2003.

\bibitem[Fukunaga \& Hostetler(1975)Fukunaga and
  Hostetler]{FukunagaHostetler:1975}
Fukunaga, K. and Hostetler, L.
\newblock The estimation of the gradient of a density function, with
  applications in pattern recognition.
\newblock \emph{IEEE Transactions on Information Theory}, 21\penalty0
  (1):\penalty0 32--40, 1975.

\bibitem[Gasser et~al.(1997)Gasser, Hall, and
  Presnell]{GasserHallPresnell:1997}
Gasser, T., Hall, P., and Presnell, B.
\newblock Nonparametric estimation of the mode of a distribution of random
  curves.
\newblock \emph{Journal of the Royal Statistical Society: Series B},
  60:\penalty0 681--691, 1997.

\bibitem[Kamalov \& Leung(2020)Kamalov and Leung]{KamalovLeung:2020}
Kamalov, F. and Leung, H.~H.
\newblock Outlier detection in high dimensional data.
\newblock \emph{Journal of Information \& Knowledge Management}, 19\penalty0
  (01), 2020.

\bibitem[Kane \& Nelson(2014)Kane and Nelson]{KaneNelson:2014}
Kane, D.~M. and Nelson, J.
\newblock Sparser johnson-lindenstrauss transforms.
\newblock \emph{Journal of the ACM (JACM)}, 61\penalty0 (1):\penalty0 1--23,
  2014.

\bibitem[Kim \& Scott(2012)Kim and Scott]{KimScott:2012}
Kim, J. and Scott, C.~D.
\newblock Robust kernel density estimation.
\newblock \emph{The Journal of Machine Learning Research}, 13\penalty0
  (1):\penalty0 2529--2565, 2012.

\bibitem[Kirszbraun(1934)]{Kirszbraun:1934}
Kirszbraun, M.
\newblock {\"U}ber die zusammenziehende und lipschitzsche transformationen.
\newblock \emph{Fundamenta Mathematicae}, 22\penalty0 (1):\penalty0 77--108,
  1934.

\bibitem[{Larsen} \& {Nelson}(2017){Larsen} and {Nelson}]{LarsenNelson:2017}
{Larsen}, K.~G. and {Nelson}, J.
\newblock Optimality of the johnson-lindenstrauss lemma.
\newblock In \emph{58th Annual Symposium on Foundations of Computer Science
  (FOCS)}, pp.\  633--638, 2017.

\bibitem[Lee et~al.(2021)Lee, Li, Musco, Phillips, and Tai]{LeeLiMusco:2021}
Lee, J.~C., Li, J., Musco, C., Phillips, J.~M., and Tai, W.~M.
\newblock Finding an approximate mode of a kernel density estimate.
\newblock In \emph{29th Annual European Symposium on Algorithms (ESA 2021)},
  volume 204, pp.\  61:1--61:19, 2021.

\bibitem[Phillips \& Tai(2018)Phillips and Tai]{PhillipsTai:2018}
Phillips, J.~M. and Tai, W.~M.
\newblock Near-optimal coresets of kernel density estimates.
\newblock In \emph{34th International Symposium on Computational Geometry},
  2018.

\bibitem[Phillips et~al.(2015)Phillips, Wang, and
  Zheng]{PhillipsWangZheng:2015}
Phillips, J.~M., Wang, B., and Zheng, Y.
\newblock Geometric inference on kernel density estimates.
\newblock In \emph{International Symposium on Computational Geometry}, 2015.

\bibitem[Scott(2015)]{Scott:2015}
Scott, D.~W.
\newblock \emph{Multivariate density estimation: theory, practice, and
  visualization}.
\newblock John Wiley \& Sons, 2015.

\bibitem[Sheikhpour et~al.(2016)Sheikhpour, Sarram, and
  Sheikhpour]{SheikhpourSarramSheikhpour:2016}
Sheikhpour, R., Sarram, M.~A., and Sheikhpour, R.
\newblock Particle swarm optimization for bandwidth determination and feature
  selection of kernel density estimation based classifiers in diagnosis of
  breast cancer.
\newblock \emph{Applied Soft Computing}, 40:\penalty0 113--131, 2016.

\bibitem[Shen et~al.(2007)Shen, Brooks, and van~den
  Hengel]{ShenBrooksHengel:2007}
Shen, C., Brooks, M.~J., and van~den Hengel, A.
\newblock Fast global kernel density mode seeking: Applications to localization
  and tracking.
\newblock \emph{IEEE Transactions on Image Processing}, 16:\penalty0 1457 --
  1469, 2007.

\bibitem[Shenmaier(2015)]{Shenmaier:2015}
Shenmaier, V.
\newblock Complexity and approximation of the smallest k-enclosing ball
  problem.
\newblock \emph{European Journal of Combinatorics}, 48:\penalty0 81--87, 2015.

\bibitem[Shenmaier(2019)]{Shenmaier:2019}
Shenmaier, V.
\newblock A structural theorem for center-based clustering in high-dimensional
  euclidean space.
\newblock In \emph{Machine Learning, Optimization, and Data Science}, pp.\
  284--295. Springer International Publishing, 2019.

\bibitem[Silverman(2018)]{Silverman:2018}
Silverman, B.~W.
\newblock \emph{Density estimation for statistics and data analysis}.
\newblock Routledge, 2018.

\bibitem[Yavlinsky et~al.(2005)Yavlinsky, Schofield, and
  R{\"u}ger]{YavlinskySchofieldRuger:2005}
Yavlinsky, A., Schofield, E., and R{\"u}ger, S.
\newblock Automated image annotation using global features and robust
  nonparametric density estimation.
\newblock In \emph{International Conference on Image and Video Retrieval}, pp.\
   507--517, 2005.

\end{thebibliography}
\bibliographystyle{icml2023}

\newpage
\appendix
\onecolumn
\section{Additional Proofs}\label{appendix}

\subsection{Proofs for \Cref{sec:dimred}}
\mapdownapproxmode*
\begin{proof}
First recall the definitions of $\bar{\K}_{M}(x)$ and $\bar{\K}_{\Pi M}(x)$, which are just fixed positive scalings of ${\K}_{M}(x)$ and ${\K}_{\Pi M}(x)$. It suffices to prove that:
\begin{equation}\label{full_eq_kde}
        (1-\epsilon) \max_{x \in \R^d} \bar{\K}_M(x) \le \max_{x \in \R^w} \bar{\K}_{\Pi M} (x) \le \max_{x \in \R^d} \bar{\K}_{M}(x),
\end{equation}
To prove \eqref{full_eq_kde} we will apply the guarantee of Definition \ref{def:jl} to the $n+1$ points in $\{x^*\}\cup M$, where $x^* \in \argmax_{x \in \R^d} \bar{\K}_{M}(x)$. This guarantee ensures that with probability $(1-\delta)$, $\|a - b\|_2^2 \leq \|\Pi a - \Pi b\|_2^2 \leq (1+\gamma)\|a - b\|_2^2$ for all $a,b$ in this set, where $\Pi \in \R^{w\times d}$ is the JL matrix from the theorem.

We first prove the right hand side of \eqref{full_eq_kde} using an argument identical to the proof of Lemma
10 from \cite{LeeLiMusco:2021}. 
Consider the set of $n$ points $\Pi M$ that contains $\Pi m$ for all $m \in M$. Let $g$ be a map from each point in this set to the corresponding point in $M$. Since $\|\Pi m_1 - \Pi m_2\|_2^2 \geq \|m_1 - m_2\|_2^2$ for all $m_1,m_2\in M$ as
guaranteed above, we have that $g$ is 1-Lipschitz. From Kirszbraun's theorem (\hspace{-.4em}~\Cref{theo:Kirs}) it follows that there is a function $\tilde{g}: \R^d \rightarrow \R^w$  which agrees with $g$ on inputs in $\Pi M$ and satisfies $\|\tilde{g}(s) - \tilde{g}(t)\|_2^2 \leq  \|s-t\|_2^2$ for all $s,t \in \R^d$. 
So for any $x\in \R^w$, there is some $x' = \tilde{g}(x)$ such that, for all $m\in M$, 
\begin{align*}
    \|x' - m\|_2^2 \leq \|x - \Pi m\|_2^2.
\end{align*}
The right hand side of \eqref{full_eq_kde}  then follows: there must be some point $x'$ such that for all $m$, $\|x' - m\|_2^2 \leq \|\tilde{x}^* - \Pi m\|_2^2$ where $\tilde{x}^* \in \argmax_{x \in \R^w} \bar{\K}_{\Pi M}(x)$. Overall we have:
\begin{align*}
    \max_{x \in \R^w} \bar{\K}_{\Pi M} (x) = \bar{\K}_{\Pi M} (\tilde{x}^*) = \sum_{m\in M} \kappa(\|\tilde{x}^* - \Pi m\|_2^2) \leq \sum_{m\in M} \kappa(\|x' - m\|_2^2) \leq \max_{x \in \R^d} \bar{\K}_{M}(x).
\end{align*}
In the second to last inequality we used that $\kappa$ is non-increasing. 

We next prove the left hand side of \eqref{full_eq_kde}. We first have:
\begin{align*}
    \max_{x \in \R^w} \bar{\K}_{\Pi M}(x) &= \max_{x \in \R^w} \sum_{m \in M} \Kf(\norm{x-\Pi m}^2_2) \geq \sum_{m \in M} \Kf(\norm{\Pi x^*-\Pi m}^2_2)  \geq \sumppcondition{\xi} \Kf(\norm{\Pi x^*-\Pi m}^2_2),
\end{align*}
where $x^* \in \argmax_{x \in \R^d} \bar{\K}_M(x)$. Applying the JL guarantee to the $n+1$ points in $\{x^*\}\cup M$, we have that for all $m$, $\norm{\Pi x^* -\Pi m}^2_2\leq (1+\gamma) \norm{x^*-m}^2_2$. So plugging into the equation above, we have: 
\begin{align*}
    \max_{x \in \R^w} \bar{\K}_{\Pi M}(x)& \geq \sumppcondition{\xi} \Kf((1+\gamma)\norm{x^*- m}^2_2)
\end{align*} 
We can then bound:
\begin{align*}
    &\sumppcondition{\xi} \Kf((1+\gamma)\norm{x^*- m}^2_2) \nonumber\\
    & \geq \sumppcondition{\xi} \Kf(\norm{x^*- m}^2_2)+\gamma\norm{x^*-m}_2^2\min_{z\in [\norm{x^* - m}_2^2, (1 + \gamma)\norm{x^* - m}_2^2]} \Kf'(z)\nonumber\\
    & \geq \sumppcondition{\xi} \Kf(\norm{x^*- m}^2_2)+\gamma\cdot\min_{z\in [\norm{x^* - m}_2^2, (1 + \gamma)\norm{x^* - m}_2^2]} \Kf'(z)\cdot z \nonumber\\
    & \geq \sumppcondition{\xi} \Kf(\norm{x^*- m}^2_2)+\gamma\cdot\min_{z\in [\norm{x^* - m}_2^2, (1 + \gamma)\norm{x^* - m}_2^2]} \Kf'(z)\cdot z\cdot\frac{\Kf(\norm{x^* - m}_2^2)}{\Kf(z)}. 
\end{align*}
The last inequality follows from the fact that $\kappa$ is non-increasing, so $k'(z)\cdot z$ is negative or zero and $\frac{\Kf(\norm{x^* - m}_2^2)}{\Kf(z)} \geq 1$. Invoking our definition of $\Kf'_{\min}$ and choice of $\gamma = -\frac{\epsilon}{2\Kf'_{\min}}$ we can continue:
\begin{align*}
    &\sumppcondition{\xi} \Kf(\norm{x^*- m}^2_2)+\gamma\cdot\min_{z\in [\norm{x^* - m}_2^2, (1 + \gamma)\norm{x^* - m}_2^2]} \Kf'(z)\cdot z\cdot\frac{\Kf(\norm{x^* - m}_2^2)}{\Kf(z)} \\
    & \geq \sumppcondition{\xi} \Kf(\norm{x^*- m}^2_2)+\gamma\cdot \Kf'_{\min}\cdot \Kf(\norm{x^* - m}_2^2)\\
    & = \sumppcondition{\xi} \Kf(\norm{x^*- m}^2_2) -\epstwo\cdot \Kf(\norm{x^* - m}_2^2)\\
    & = \left(1-\epstwo\right)\sumppcondition{\xi} \Kf(\norm{x^*- m}^2_2)\\
    & = \left(1-\epstwo\right)\left(\sum_{m \in M} \Kf(\norm{x^*- m}^2_2) -\sumppconditionp{\xi} \Kf(\norm{x^*- m}_2^2)\right)\\
    & \geq \left(1-\epstwo\right)\left(\sum_{m \in M} \Kf(\norm{x^*- m}^2_2) - \epstwo\right) \\
    & = \left(1-\epstwo\right)\left(\max_{x \in \R^d} \bar{\K}_{M}(x)-\epstwo\right) 
    \geq \left(1-\epstwo\right)^2\max_{x \in \R^d} \bar{\K}_{M}(x)
     \geq \left(1-\epsilon\right)\max_{x \in \R^d} \bar{\K}_{M}(x)
\end{align*}
Note that in the second to last line we invoked the definition of $\xi \geq \xi_{\Kf}(\frac{\epsilon}{2n})$. Specifically, we used that, for any $m$ with $\norm{x^*-m}_2^2 > \xi$, $\Kf(\norm{x^*- m}^2_2) \leq \frac{\epsilon}{2n}$. In the last line we use that $\max_{x \in \R^d} \bar{\K}_{M}(x) \geq 1$.
\end{proof}

\begin{restatable}{lemma}{xismoothexp}\label{lem:xismoothexp}
    Let $\K_M = (\Kf, M)$ be a KDE for a point set $M$ with cardinality $n$ and relative-distance smooth kernel $\Kf$ with parameters $c_1, d_1,q_1,c_2,d_2$. Then for any $\epsilon \in (0,1]$,
    $\xi_{\Kf}(\frac{\epsilon}{2n}) \leq c \log^{1/d_1}\left(\frac{2n}{\epsilon}\right)$ for a fixed constant $c$ that depends on $c_1,d_1,q_1,c_2,$ and $d_2$.
\end{restatable}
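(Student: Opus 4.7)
The plan is to derive an upper bound on $\xi_{\Kf}(\epsilon/(2n))$ directly from the lower bound $c_1 t^{d_1} - q_1 \leq \frac{-\Kf'(t)\, t}{\Kf(t)}$ in the definition of relative-distance smoothness, by integrating the logarithmic derivative $\frac{-\Kf'(t)}{\Kf(t)} = \frac{d}{dt}[-\ln \Kf(t)]$ and inverting. Dividing the inequality by $t>0$ gives $\frac{-\Kf'(t)}{\Kf(t)} \geq c_1 t^{d_1-1} - q_1/t$, which is what I want to integrate, but to keep the bound clean and avoid having to track a logarithmic correction term $-q_1\ln(t/T)$ after integration, I will first absorb the $-q_1$ shift by choosing a threshold above which the second-order term is dominated.

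Concretely, set $T = (2q_1/c_1)^{1/d_1}$. For every $s \geq T$ one has $c_1 s^{d_1} \geq 2q_1$, hence $c_1 s^{d_1-1} - q_1/s \geq \tfrac{c_1}{2} s^{d_1-1}$. Integrating the logarithmic derivative bound from $T$ to any $t \geq T$ therefore yields
\begin{equation*}
-\ln \Kf(t) + \ln \Kf(T) \;\geq\; \int_T^t \tfrac{c_1}{2} s^{d_1-1}\, ds \;=\; \tfrac{c_1}{2d_1}\bigl(t^{d_1} - T^{d_1}\bigr).
\end{equation*}
Because $\Kf$ is non-increasing with $\Kf(0) = 1$, $-\ln \Kf(T) \geq 0$, so $-\ln \Kf(t) \geq \tfrac{c_1}{2d_1}(t^{d_1} - T^{d_1})$.

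Solving for when this exceeds $\ln(2n/\epsilon)$, it suffices to take $t^{d_1} \geq \tfrac{2d_1}{c_1}\ln(2n/\epsilon) + T^{d_1}$, giving
\begin{equation*}
\xi_{\Kf}(\epsilon/(2n)) \;\leq\; \max\!\Bigl(T,\ \bigl(\tfrac{2d_1}{c_1}\ln(2n/\epsilon) + T^{d_1}\bigr)^{1/d_1}\Bigr).
\end{equation*}
Since $\ln(2n/\epsilon) \geq \ln 2 > 0$ (using $\epsilon \leq 1$ and $n\geq 1$), both the constant $T$ and the constant term $T^{d_1}$ inside the parenthesis can be folded into the leading $\log^{1/d_1}(2n/\epsilon)$ factor, producing a bound of the form $c \log^{1/d_1}(2n/\epsilon)$ with $c$ depending only on $c_1, d_1, q_1$.

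The argument has only one subtle point, which is the main obstacle to watch: ensuring the $-q_1$ shift in the relative-distance smoothness lower bound does not spoil the integration. The threshold trick of restricting to $t \geq T$ is exactly what kills this issue and keeps the final constant clean. The upper-bound parameters $c_2, d_2$ play no role in this lemma; they will only matter later when bounding the mode finding cost in Theorem \ref{lemma:quais}.
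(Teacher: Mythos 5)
Your proof is correct and takes essentially the same route as the paper's: both write the relative-distance smoothness lower bound as a bound on the logarithmic derivative $-\Kf'(t)/\Kf(t)$, integrate it to get $\Kf(t)\le e^{-\Omega(t^{d_1})}$ (up to lower-order terms), and invert to bound the critical radius by $c\log^{1/d_1}(2n/\epsilon)$. The only difference is bookkeeping of the $-q_1$ shift: the paper integrates from $0$, splits the integral at $(q_1/c_1)^{1/d_1}$, and carries an explicit $-q_1\log(t)$ correction that is later dominated by the main term, whereas you restrict to $t\ge T=(2q_1/c_1)^{1/d_1}$, use $\Kf(T)\le 1$ to drop the initial segment, and absorb the shift with a factor-of-two slack --- a marginally cleaner handling of the same idea.
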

\begin{proof}
Since $\Kf$ is positive, non-increasing, and $\Kf(0) = 1$ we can write $\Kf(t) = e^{-f(t)}$ for some positive, non-decreasing function $f$ with $f(0) = 0$. We have $\frac{-\Kf'(t)t}{\Kf(t)} = f'(t)t \geq c_1t^{d_1} - q_1$ and thus $f'(t) \geq \max(0,c_1t^{d_1 - 1} - \frac{q_1}{t})$. Writing $\Kf(t) = e^{-\int_0^t f'(x)dx}$, we will upper bound $\Kf(t)$ by lower bounding $\int_0^t f'(x)dx$. Specifically, we have:
\begin{align*}
    \int_0^t f'(x)dx &\geq \int_0^t \max\left(0,c_1x^{d_1 - 1} - \frac{q_1}{x}\right) dx\\
    & = \int_0^t c_1x^{d_1 - 1} dx - \int_0^t \min(c_1x^{d_1 - 1},\frac{q_1}{x}) dx\\
    &= \frac{c_1}{d_1}t^{d_1} - \int_0^{(q_1/c_1)^{1/d_1}}c_1x^{d_1 - 1} dx - \int_{(q_1/c_1)^{1/d_1}}^t \frac{q_1}{x} dx \\
    &= \frac{c_1}{d_1}t^{d_1} - \frac{q_1}{d_1} - q_1\log(t) + \frac{q_1}{d_1}\log(q_1/c_1). 
\end{align*}
It follows that 
$
     \Kf(t) \leq e^{-\frac{c_1}{d_1}t^{d_1} + \frac{q_1}{d_1} + q_1\log(t) - \frac{q_1}{d_1}\log\frac{q_1}{c_1}}.
$
We want to upper bound the smallest $t$ such that $\Kf(t) \leq \frac{\epsilon}{2n}$. 
Let $z$ be a sufficiently large constant so that:
\begin{align*}
    \frac{c_1}{d_1}z^{d_1} \geq 2\left(\frac{q_1}{d_1} + q_1\log(z) - \frac{q_1}{d_1}\log\frac{q_1}{c_1}\right)
\end{align*}
Then it suffices to pick $t \geq \max\left(z, \left(\frac{d_1}{c_{1}}\log(\frac{2n}{\epsilon})\right)^{1/d_1}\right) = O\left(\log^{1/d_1}(\frac{2n}{\epsilon})\right)$ to ensure that $\Kf(t) \leq \frac{\epsilon}{2n}$.
\end{proof}

\rdsmain*
\begin{proof}
With \Cref{lem:xismoothexp} in place, our main result for relatively distance smooth kernels follows directly:
By~\Cref{lem:xismoothexp}, $\xi =c\log^{1/d_1}\left(\frac{2n}{\epsilon}\right) \geq \xi_{\Kf}(\frac{\epsilon}{2n})$. And since $\Kf$ is relative-distance smooth, we have that:
\begin{align*}
    \min_{0 \leq x \leq 2\xi} \frac{\Kf'(x)x}{\Kf(x)} & \geq \min_{0 \leq x \leq 2\xi} -c_{2}x^{d_2} = - c_{2}(2\xi)^{d_2}  \geq -c'\log^{d_2/d_1}\left(\frac{2n}{\epsilon}\right),
\end{align*}
for sufficiently large constant $c'$. Let $\Kf'_{\min} = -c'\log^{d_2/d_1}\left(\frac{2n}{\epsilon}\right)$. 
Invoking \Cref{thm:mapdownapproxmode}, we require that $\gamma = -\frac{\epsilon}{2\Kf'_{\min}} = \frac{\epsilon}{2c'}\log^{-d_2/d_1}\left(\frac{2n}{\epsilon}\right)$.
\end{proof}

\subsection{Proofs for \Cref{sec:modrec}}
\kdemeanshift*
\begin{proof}
	We will show that:
	\begin{align}
		\label{eq:main_meanshift}
		{\K}_{M} (x') \geq {\K}_{\Pi M} (\tilde{x})
	\end{align}
where $\tilde{x} \in \R^w$ is the approximate maximizer of ${\K}_{\Pi M}$, as defined in the theorem statement. 
If we can prove \eqref{eq:main_meanshift} then the theorem follows by the following chain of inequalities:
	\begin{align*}
	{\K}_{M} (x') \geq {\K}_{\Pi M} (\tilde{x}) \geq (1-\alpha)\max_{x}{\K}_{\Pi M} ({x})\geq (1-\alpha)(1-\epsilon)\max_{x}{\K}_{M} ({x}) \geq  (1-\alpha-\epsilon)\max_{x}{\K}_{M} ({x}).
	\end{align*}
To prove \eqref{eq:main_meanshift}, we follow a similar approach to \cite{LeeLiMusco:2021}. Since $\Pi$ satisfies \eqref{eq:lip_pi}, the function $f$ mapping $\{\Pi x^*\}\cup \Pi M \rightarrow \{x^*\}\cup M$ is 1-Lipschitz. Accordingly, by Kirszbraun's Extension Theorem (\Cref{theo:Kirs}), there is some function $g(x): \R^w \rightarrow \R^d$ that agrees with $f$ on inputs from $\{\Pi x^*\}\cup \Pi M$ and remains 1-Lipschitz. It follows that, if we apply $g$ to $\tilde{x}$ then for all $m\in M$,
\begin{align*}
	\norm{{g}(\tilde{x}) - m}_2 \le \norm{\tilde{x} - \Pi m}_2. 
\end{align*}
In other words, for our approximate low-dimensional mode $\tilde{x}$, there is a high-dimensional point $g(\tilde{x})$ that is closer to all points in $M$ than $\tilde{x}$ is to the points in $\Pi M$. In fact, by extending all points by one extra dimension, we can obtain an exact equality. In particular, let $x'' \in \R^{d+1}$ equal $g(\tilde{x})$ on its first $d$ coordinates, and $0$ on its last coordinate. For each $m \in M$ let $m'' \in \R^{d+1}$ equal $m$ on its first $d$ coordinates, and $\sqrt{\norm{\tilde{x} - \Pi m}_2^2 - \norm{{g}(\tilde{x} ) - m}_2^2}$ on its last coordinate. Let $M'' \subset \R^{d\times 1}$ denote the set of these transformed points. First observe that for all $m''\in M''$
\begin{align}
	\label{eq:eqdistances}
	\norm{x'' - m''}_2 = \norm{\tilde{x} - \Pi m}_2. 
\end{align}
Accordingly, $x'$ as defined in the theorem statement is exactly equivalent to the first $d$ entries of the $d+1$ dimensional vector that would be obtained from running one iteration of mean-shift on $x''$ using point set $M''$. Call this $d+1$ dimensional vector $\bar{x}'$. By \Cref{theo:Meer}, we have that:
\begin{align*}
	\K_{M''}(\bar{x}') \geq  \K_{M''}(x'') =  \K_{\Pi M}(\tilde{x}). 
\end{align*}
The last equality follows from \eqref{eq:eqdistances}. Finally, for any non-increasing kernel we have that:
\begin{align*}
	\K_{M}({x}') \geq  \K_{M''}(\bar{x}'),
\end{align*}
because $\|x' - m\|_2^2 \leq \|\bar{x}' - m''\|_2^2$ for all $m$. This is simply because $x'$ and $m$ are equal to $\bar{x}'$ and $m''$, but with their last entry removed, so they can only be closer together. Combining the previous two inequalities proves \eqref{eq:main_meanshift}, which establishes the theorem. 
\end{proof}

\fullconvex*
\begin{proof}
\Cref{thm:fullconvex} immediately follows by combining \Cref{thm:mapdownapproxmode} with \Cref{thm:kde_meanshift}. In particular, if $\Pi$ is chosen with $w = O\left(\frac{\log((n+1)/\delta)}{\min(1,\gamma^2)}\right)$ rows (as in \Cref{alg:convex}) then with probability $1-\delta$, we have that both \eqref{eq:theorem2_equation} and \eqref{eq:lip_pi} hold with probability $1-\delta$, which are the only conditions needed for \Cref{thm:kde_meanshift} to hold.
\end{proof}

\subsection{Proofs for \Cref{sec:low-dim-prob}}
\epsnet*
\begin{proof}
The second claim on the size of $\N(\K,\delta)$ is immediate. For the first claim, note that $p$ can be written as $p''+\sum_i \frac{k'_i\sqrt{\delta}}{\sqrt{d}}e_i$ with $p'' \in M$ and $|k'_i| \leq \frac{\sqrt{d}\xi}{\sqrt{\delta}}$. Let $p' = p''+\sum_i \frac{\lfloor k'_i\rfloor\sqrt{\delta}}{\sqrt{d}}e_i \in \N(\K,\delta)$. Then we have
\begin{align*}
    \norm{p-p'}_2^2 
    & = \norm{p''+\sum_i \frac{k'_i\sqrt{\delta}}{\sqrt{d}}e_i - p''-\sum_i \frac{\lfloor k'_i\rfloor\sqrt{\delta}}{\sqrt{d}}e_i}_2^2 = \norm{\sum_{i=1}^d \frac{(k'_i - \lfloor k'_i\rfloor)\sqrt{\delta}}{\sqrt{d}}e_i}^2 \\
    & = \sum_{i=1}^{d} \left(\frac{(k'_i - \lfloor k'_i\rfloor)\sqrt{\delta}}{\sqrt{d}}\right)^2 \leq \sum_{i=1}^{d} \left(\frac{\sqrt{\delta}}{\sqrt{d}}\right)^2 = \delta.
\end{align*}
\end{proof}

\lowdimmod*
\begin{proof}
Since there always exists a mode in the critical area of $\K$, we can use~\Cref{lem:epsnet} to find a point $p'$ at most $\delta$ away from a mode $p$ of $\K$ in $O\left(n(2\sqrt{d}\xi/\sqrt{\delta})^d\right)$. Then we have
\begin{align*}
\K(p') & = \sum_{m\in M}\Kf(\|m-p'\|_2^2) \geq \sum_{m\in M}\Kf(\|m-p\|_2^2 + \|p-p'\|_2^2)  \geq \sum_{m\in M}\Kf(\|m-p\|_2^2 + \delta) \\
& \geq \sum_{m\in M}\Kf(\|m-p\|_2^2)) - \epsilon \Kf(\|m-p\|_2^2)) = (1-\epsilon) \sum_{m\in M}\Kf(\|m-p\|_2^2)) = (1-\epsilon)\K(p)
\end{align*}
\end{proof}

\subsection{Analysis for Relative-Distance Smooth Kernels}
Let $\bar{\xi} = \max(1, \xi_{\Kf}(1/n))$. We will prove that for any relative distance smooth kernel $\Kf$ with parameters $c_1, d_1, q_1, c_2$, and $d_2$, we have $\Kf(c)-\Kf(c+\delta)\leq \epsilon\Kf(c)$ for all $c \leq \xi = \xi_{\Kf}(1/n)$ as long as: 
\begin{align*}
\delta = \min\left(\left(\frac{d_2}{c_2}\epsilon\right)^{1/d_2},\;\frac{\epsilon}{c_2}(2\bar{\xi})^{1-d_2}\right).
\end{align*}
By the definition of relative distance smooth kernels, we have that $-\Kf'(t) \leq c_2 t^{d_2-1}\Kf(t)$. Hence,
\begin{align*}
\Kf(c) - \Kf(c+\delta) \leq \int_c^{c+\delta}c_2 t^{d_2-1}\Kf(t) dt \leq \Kf(c)\int_c^{c+\delta}c_2t^{d_2-1}dt.
\end{align*}
The last step follows from the fact that $\kappa(t)$ is non-increasing in $t$.
Since $d_2 > 0$, this simplifies to
\begin{align*}
\Kf(c)-\Kf(c+\delta) \leq \Kf(c)\int_c^{c+\delta}c_2t^{d_2-1}dt = \frac{c_2}{d_2}\Kf(c)((c+\delta)^{d_2} - c^{d_2}).
\end{align*}
So, we need to show that $\frac{c_2}{d_2}\left((c+\delta)^{d_2} -c^{d_2}\right) \leq \epsilon$. We consider two cases:

\textbf{Case 1:} When $d_2$ is $<1$, consider the function $f(x) = (x+\delta)^{d_2}-x^{d_2}$. This function is non-increasing, indeed, $f'(x) = d_2((x+\delta)^{d_2-1}-x^{d_2-1}) < 0$. Hence, we have that 
\begin{align*}
((c+\delta)^{d_2}-c^{d_2}) \leq \delta^{d_2}.
\end{align*}
We can pick $\delta = (\frac{d_2}{c_2}\epsilon)^{1/d_2}$.

\textbf{Case 2:} On the other hand, when $d_2 \geq 1$, the function $f(x) = x^{d_2}$ is convex, so we have:
\begin{align*}
(c+\delta)^{d_2} -c^{d_2} \leq \delta f'(c+\delta) = \delta d_2 (c+\delta)^{d_2-1} \leq \delta d_2 2^{d_2-1}\max(\xi^{d_2-1}, \delta^{d_2-1})\leq \delta d_2 2^{d_2-1}\bar{\xi}^{d_2 - 1}
\end{align*}
In this case, we can choose $\delta = \frac{\epsilon}{c_2}(2\bar{\xi})^{1-d_2}$. 

Hence, picking $\delta = \min\left(\left(\frac{d_2}{c_2}\epsilon\right)^{1/d_2},\;\frac{\epsilon}{c_2}(2\bar{\xi})^{1-d_2}\right)$ ensures that $(c+\delta)^{d_2} - c^{d_2} \leq \frac{d_2}{c_2}\epsilon$, and thus that $\frac{c_2}{d_2}\left((c+\delta)^{d_2} - c^{d_2} \right) \leq \epsilon$, as required.


\section{Mode Recovery for Non-convex Kernels}
\label{sec:nonconvexrecovery}
In \Cref{sec:convexrecovery}, we show that the mean-shift method can rapidly recover an approximate mode for any convex, non-increasing kernel from an approximation to the JL reduced problem. In this section, we briefly comment on an alternative method that can also handle non-convex kernels, albeit at the cost of worse runtime. Specifically, it is possible to leverage a recent result from \cite{BiessKontorovichMakarychev:2019} on an algorithmic version of the Kirszbraun extension theory. This work provides an algorithm for explicitly extending a function $f$ that is Lipschitz on some fixed set of points to \emph{one additional} point. The main result follows:

\begin{theorem}[\cite{BiessKontorovichMakarychev:2019}]\label{theo_biess}
    Consider a finite set $(x_i)_{i\in [n]}\subset X = \R^w$, and its image $(y_i)_{i\in [n]}\subset Y = \R^d$ under some $L$-Lipschitz map $f: X \rightarrow Y$. There is an algorithm running in $O(nw + nd\log n/\epsilon^2)$ time which returns, for any point $z\in \R^w$, and a precision parameter $\epsilon > 0$, a point $z'\in \R^d$ satisfying   for all $i \in [n]$,
    \begin{align*}
    \norm{z' - f(x_i)}^2 \le (1 + \epsilon)L\norm{z - x_i}^2
    \end{align*}
\end{theorem}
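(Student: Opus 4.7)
The plan is to cast the problem as approximate convex feasibility: find $z' \in \R^d$ lying in the intersection of the $n$ balls $B_i = \{y \in \R^d : \|y-y_i\|^2 \le (1+\epsilon)L\|z-x_i\|^2\}$. Non-emptiness (at $\epsilon = 0$) is the content of Kirszbraun's extension theorem (Fact~\ref{theo:Kirs}): extending $f$ from $\{x_1,\dots,x_n\}$ to also include $z$ while preserving the Lipschitz constant produces a point $\tilde f(z)$ lying in every $B_i$, so the algorithmic task is merely to recover a point slightly inside these balls.

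First I would compute the scalars $r_i^2 := L\|z-x_i\|^2$ for all $i$, which costs $O(nw)$ time and accounts for the first summand of the claimed runtime. Then I would apply a Johnson--Lindenstrauss sketch $\Pi \in \R^{k\times d}$ with $k = O(\log n/\epsilon^2)$ to the image points, forming $\tilde y_i = \Pi y_i$; this costs $O(nkd) = O(nd\log n/\epsilon^2)$ and accounts for the second summand. By the JL guarantee, pairwise distances among the $\tilde y_i$, and distances from any fixed candidate point to the $\tilde y_i$, are preserved up to a factor of $1 \pm \Theta(\epsilon)$.

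In the sketched space I would now solve the reduced feasibility problem: find $u \in \R^k$ with $\|u - \tilde y_i\|^2 \le (1+\epsilon/2)r_i^2$ for every $i$. Because the ambient dimension is only $O(\log n/\epsilon^2)$, this is tractable, either via convex programming or via a Helly-type enumeration over a small number of tight constraints (the JL guarantee ensures the intersection is still non-empty for the sketched radii). The final step is to lift $u$ back to $\R^d$: represent $u$ as a convex combination $u = \sum_i \lambda_i \tilde y_i$, using Carath\'eodory's theorem to ensure at most $k+1$ nonzero weights, and define $z' := \sum_i \lambda_i y_i$, which can be computed in $O(nd)$ time.

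The main obstacle will be certifying that this lifted $z'$ satisfies the ball constraints in $\R^d$ for \emph{every} index, not only on average or only for the indices appearing in the convex combination. Overcoming this requires coupling the JL guarantee with convexity of the squared-distance function $y \mapsto \|y - y_j\|^2$: Jensen's inequality applied to $\sum_i \lambda_i y_i$ bounds $\|z'-y_j\|^2$ by a weighted average of $\|y_i-y_j\|^2$, which by JL is controlled by the corresponding quantity in the sketched space, which in turn is bounded by $r_j^2$ through the approximate feasibility of $u$. Making every link in this chain tight enough to collapse into the single $(1+\epsilon)$ factor in the statement — while also ensuring the lifting step uses a convex-combination representation that a concrete low-dimensional solver actually produces (rather than one that merely exists abstractly) — is the delicate technical heart of the result and where I would expect the bulk of the work to lie.
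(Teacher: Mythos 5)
The paper does not prove this theorem; it is imported from \cite{BiessKontorovichMakarychev:2019} and invoked as a black box in the proof of \Cref{thm:non_convex}, so there is no internal argument of the paper to compare your proposal against.

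Evaluating the proposal on its own terms, the step you flag as the ``delicate technical heart'' is not a matter of tightening constants --- as outlined, the chain of inequalities breaks structurally. With $u=\sum_i\lambda_i\tilde y_i$ and $z'=\sum_i\lambda_i y_i$, the bias--variance identity gives $\norm{z'-y_j}^2 = \sum_i\lambda_i\norm{y_i-y_j}^2 - V$ where $V:=\sum_i\lambda_i\norm{y_i-z'}^2$, and analogously $\sum_i\lambda_i\norm{\tilde y_i-\tilde y_j}^2 = \norm{u-\tilde y_j}^2+\tilde V$ with $\tilde V:=\sum_i\lambda_i\norm{\tilde y_i-u}^2$. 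Pushing your chain through the JL estimate (which gives $\sum_i\lambda_i\norm{y_i-y_j}^2\le\sum_i\lambda_i\norm{\tilde y_i-\tilde y_j}^2$ and $\tilde V\le(1+\gamma)V$, since $\tilde V$ and $V$ are positive combinations of pairwise squared distances) yields $\norm{z'-y_j}^2 \le \norm{u-\tilde y_j}^2 + (\tilde V-V)\le(1+O(\gamma))r_j^2 + \gamma V$. The residual $\gamma V$ is \emph{additive}: $V$ is a $\lambda$-weighted average of the $\norm{y_i-z'}^2\le r_i^2$, so it can be on the order of $\max_i r_i^2$. For any index $j$ with $r_j\ll\max_i r_i$ this term swamps $\epsilon\,r_j^2$, and the only repair inside your framework is to take $\gamma\propto\epsilon\cdot(\min_j r_j/\max_i r_i)^2$, which inflates the sketch dimension by the squared aspect ratio of the radii and destroys the advertised $O(\log n/\epsilon^2)$ bound. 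Jensen's inequality cannot rescue this because the slack it introduces, $\sum_i\lambda_i\norm{y_i-y_j}^2-\norm{z'-y_j}^2$, is exactly $V$ --- the very quantity you then fail to control. The algorithm of \cite{BiessKontorovichMakarychev:2019} does not lift an arbitrary sketched feasible point by convex combination; it works with the optimality (min--max / QCQP and its dual) structure of the one-point Kirszbraun extension so that the certificate transfers across dimensions with a per-constraint multiplicative error, which is the ingredient missing from your outline.
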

From this result we can obtain a claim comparable to \Cref{thm:fullconvex}:
\begin{restatable}{theorem}{nonconvex}\label{thm:non_convex}
Let $\K_M = (\Kf, M)$ be a $d$-dimensional shift-invariant KDE where $\Kf$ is differentiable and non-increasing but not necessary convex. Let $\epsilon$ and $\gamma$ (which depends on $\Kf$ and $\epsilon$) be as in \Cref{thm:mapdownapproxmode} and let $\Pi$ be a random JL matrix as in \Cref{def:jl} with $w = O\left(\frac{\log((n+1)/\delta)}{\min(1,\gamma^2)}\right)$ rows. Let $\tilde{x}$ be an approximate maximizer for ${\K}_{\Pi M}$ satisfying ${\K}_{\Pi M} (\tilde{x}) \geq (1-\alpha)\max_{x \in \R^w} {\K}_{\Pi M} (x)$. If we run the algorithm of \Cref{theo_biess} with $X = \Pi M$, $Y=M$, $z=\tilde{x}$, and error parameter $\gamma$, then the method returns $x' \in \R^d$ satisfying:
    \begin{align*}
        {\K}_{M} (x') \geq (1-2\epsilon - \alpha)\max_{x \in \R^w} {\K}_{M} (x).
    \end{align*}
\end{restatable}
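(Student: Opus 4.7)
The plan is to combine the algorithmic Lipschitz extension of \Cref{theo_biess} with the critical radius calculation from the proof of \Cref{thm:mapdownapproxmode}. Because $\Pi$ satisfies the $(\gamma, n+1, \delta)$-JL guarantee on $\{x^*\}\cup M$ for $x^* \in \argmax_x \K_M(x)$, we have $\|\Pi m_1 - \Pi m_2\|_2^2 \geq \|m_1 - m_2\|_2^2$ for all $m_1,m_2 \in M$, so the map $f:\Pi M \rightarrow M$ defined by $f(\Pi m) = m$ is $1$-Lipschitz. I would then invoke \Cref{theo_biess} with $X = \Pi M$, $Y = M$, $z = \tilde{x}$, $L = 1$, and precision parameter $\gamma$ to obtain $x' \in \R^d$ satisfying $\|x' - m\|_2^2 \leq (1+\gamma)\|\tilde{x} - \Pi m\|_2^2$ for every $m \in M$. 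Since $\Kf$ is non-increasing this immediately gives
\begin{equation*}
\bar{\K}_M(x') \;\geq\; \sum_{m\in M} \Kf\bigl((1+\gamma)\|\tilde{x} - \Pi m\|_2^2\bigr).
\end{equation*}

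To lower-bound this sum I would reuse the $\Kf'_{\min}$-based stretching argument from the proof of \Cref{thm:mapdownapproxmode}, but applied at the point $\tilde{x}$ rather than at $x^*$. Setting $\xi = \xi_{\Kf}(\epsilon/(2n))$, split the sum according to whether $\|\tilde{x} - \Pi m\|_2^2$ is at most $\xi$ or larger. Points in the far set contribute at most $\epsilon/2$ to $\bar{\K}_{\Pi M}(\tilde{x})$ by the definition of the critical radius, and contribute even less to the stretched sum above, so we simply drop them. For points in the near set, the step $\Kf((1+\gamma)t) \geq \Kf(t) + \gamma t \min_{z \in [t,(1+\gamma)t]} \Kf'(z)$, combined with $\Kf'_{\min} \leq \Kf'(z)z/\Kf(z)$ for $z \in [0,2\xi]$ and the choice $\gamma = -\epsilon/(2\Kf'_{\min})$, yields $\Kf((1+\gamma)t) \geq (1-\epsilon/2)\Kf(t)$ for every $t \leq \xi$. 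Summing over both parts gives
\begin{equation*}
\bar{\K}_M(x') \;\geq\; (1-\epsilon/2)\bigl(\bar{\K}_{\Pi M}(\tilde{x}) - \epsilon/2\bigr).
\end{equation*}

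Finally, I would chain this with the assumed guarantee $\bar{\K}_{\Pi M}(\tilde{x}) \geq (1-\alpha)\max_x \bar{\K}_{\Pi M}(x)$ and with the dimensionality reduction bound $\max_x \bar{\K}_{\Pi M}(x) \geq (1-\epsilon)\max_x \bar{\K}_M(x)$ from \Cref{thm:mapdownapproxmode}, using $\max_x \bar{\K}_M(x) \geq \Kf(0) = 1$ to absorb the additive $\epsilon/2$ into a multiplicative factor. Expanding $(1-\epsilon/2)(1-\alpha)(1-\epsilon) - \epsilon/2 \geq 1-2\epsilon-\alpha$ then produces the claimed bound. The main subtlety I expect is the reuse of the $\Kf'_{\min}$ stretching step at $\tilde{x}$ rather than at the exact mode $x^*$; this works because the bound is term-by-term in $m$ and the critical radius is an intrinsic statement about the tail of $\Kf$ relative to $n$, so it transports to any query point without change. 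A secondary point to watch is that \Cref{theo_biess} is stated with factor $(1+\epsilon)L$ rather than $(1+\epsilon)L^2$ in the distortion, but since we apply it with $L = 1$ the two coincide and no extra loss appears.
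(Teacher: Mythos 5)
Your proposal is correct and follows essentially the same route as the paper's (sketched) proof: invoke the constructive extension of \Cref{theo_biess} with $L=1$ and precision $\gamma$ to get $\norm{x'-m}_2^2 \le (1+\gamma)\norm{\tilde{x}-\Pi m}_2^2$ for all $m \in M$, then reuse the $\Kf'_{\min}$ distortion argument from \Cref{thm:mapdownapproxmode} and chain with the low-dimensional mode-value guarantee. Your explicit near/far split at $\tilde{x}$, handling the additive $\epsilon/2$ tail via $\max_x \bar{\K}_M(x) \ge \Kf(0) = 1$, simply fills in the step the paper compresses into ``by the same argument,'' and it delivers the $(1-2\epsilon-\alpha)$ bound directly without the paper's final rescaling remark.
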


\begin{proof}
For conciseness, we sketch the proof. As discussed, by \Cref{def:jl}, $\Pi M \rightarrow M$ is a 1-Lipschitz map. So it follows that the $x'$ returned by the algorithm of \cite{BiessKontorovichMakarychev:2019} satisfies for all $m\in M$,
    \begin{align*}
    \norm{x' - m}^2 \le (1 + \gamma)\norm{\tilde{x} - \Pi m}^2
    \end{align*}
It follows that:
\begin{align*}
\K_{M}(x') \geq \sum_{m\in M} \Kf\left((1+\gamma)\norm{\tilde{x} - \Pi m}^2\right).
\end{align*}
By the same argument used in the proof of \Cref{thm:mapdownapproxmode}, we have that 
\begin{align*}
\sum_{m\in M} \Kf\left((1+\gamma)\norm{\tilde{x} - \Pi m}^2\right) \geq \sum_{m\in M} (1-\epsilon)\Kf\left(\norm{\tilde{x} - \Pi m}^2\right) = (1-\epsilon)\K_{\Pi M}(\tilde{x}).
\end{align*}
In turn, since \Cref{thm:mapdownapproxmode} holds under the same conditions as \Cref{thm:non_convex}, we have:
\begin{align*}
\K_{\Pi M}(\tilde{x}) \geq (1-\alpha) \max_x \K_{\Pi M}(x) \geq (1-\alpha) (1-\epsilon)\max_x \K_{M}(x).
\end{align*}
The result follows from noting that $(1-\alpha) (1-\epsilon)^2 \geq (1-2\epsilon-\alpha$). By rescaling $\epsilon$ we can obtain equivalent precision to \Cref{thm:fullconvex}.
\end{proof}

Note that for the common relative-distance smooth kernels addressed in \Cref{cor:common_recover}, we have that $\gamma = O({\log(n /\epsilon)}/{\epsilon})$. So, the runtime of recovering a high-dimensional model using the method of \cite{BiessKontorovichMakarychev:2019} is $O(nd\log^3(n)/\epsilon^2)$. This exceeds the $O(nd)$ runtime of the mean-shift method. However, in contrast to mean-shift, the method can be applied to non-convex kernels like generalized Gaussian kernels with $\alpha > 1$.

\end{document}